%% file: main.tex
\documentclass{article}

\PassOptionsToPackage{numbers, compress}{natbib}
\usepackage[preprint]{neurips_2026}

\input{packages}

\input{notations}

\title{
    \algo: LLM Safety via Learning \\Barrier Steering
}

\author{
    Thanh Q. Tran$^{1}$\thanks{Equal contributions and $^\dagger$corresponding authors.}\hspace{1.6mm} 
    \quad Arun Verma$^{2\hspace{0.3mm}*\dagger}$
    \quad Kiwan Wong$^{3}$ 
    \quad Bryan Kian Hsiang Low$^{1,3}$ \\
    \textbf{Daniela Rus}$^{3,2}$
    \quad \textbf{Wei Xiao}$^{4,3\hspace{0.3mm}\dagger}$
    \\
    $^{1}$Department of Computer Science, National University of Singapore, Republic of Singapore\\
    $^{2}$Singapore-MIT Alliance for Research and Technology Centre, Republic of Singapore \\
    $^{3}$CSAIL, Massachusetts Institute of Technology, USA\\
    $^{4}$Worcester Polytechnic Institute, Worcester, MA, USA \\
    \texttt{thanhtran@u.nus.edu}
    \quad \texttt{arun.verma@smart.mit.edu}
    \quad \texttt{kiwan588@mit.edu} \\
    \texttt{lowkh@comp.nus.edu.sg}
    \quad \texttt{rus@csail.mit.edu} 
    \quad \texttt{wxiao3@wpi.edu}
}

\begin{document}

    \maketitle

    \begin{abstract}
        Despite the strong performance of large language models (LLMs) across diverse tasks,
        their susceptibility to adversarial attacks and unsafe content generation remains a
        significant obstacle to deployment, particularly in high-stakes settings.
        Addressing this challenge requires safety mechanisms that are both practically effective
        and theoretically grounded.
        In this paper, we introduce \algo{}, a novel inference-time framework that improves response safety
        by embedding learned nonlinear safety constraints directly into the model's latent
        representation space.
        \algo{} treats hidden-state safety classifiers as Control Barrier Functions (CBFs),
        enabling constraint-guided steering of unsafe latent trajectories during generation.
        By composing multiple safety constraints through efficient constraint merging without
        modifying the underlying LLM parameters, \algo{} preserves model utility.
        We provide theoretical results showing that applying CBFs in latent space yields a
        principled, modular, and computationally efficient approach for steering with respect to learned
        safety constraints, with guarantees conditional on the learned barriers capturing the
        intended safety property.
        Our extensive experimental results across multiple model families and datasets demonstrate that \algo{}
        substantially reduces adversarial attack success rates and unsafe generations,
        outperforming existing methods.
        The code is available in our \href{https://github.com/thanhquangtran/BarrierSteer}{GitHub repository}.
    \end{abstract}



    \section{Introduction}
    \label{sec:introduction}
    \input{latex/introduction}

    \section{Related Work}
    \label{sec:related_work}
    \input{latex/related_work}

    \section{Background and Motivation}
    \label{sec:problem}
    \input{latex/problem}

    \section{\algo{}: LLM Steering with Learned Control Barrier Functions}
    \label{sec:algorithm}
    \input{latex/algorithm}

    \section{Experiments}
    \label{sec:experiments}
    \input{latex/experiment}

    \section{Conclusion}
    \label{sec:conclusion}
    \input{latex/conclusion}

	\bibliographystyle{unsrt}
	\bibliography{references}

    \clearpage
    \appendix
    \begin{center}
    {\bf \Large{Appendix of ``\algo: LLM Safety via Learning Barrier Steering''}}
    \end{center}
    \DoToC
    \clearpage

    \input{latex/appendix}
    \FloatBarrier

    \input{latex/additional_related_work}
    \FloatBarrier

    \input{latex/faq}
    \FloatBarrier

    \input{latex/limitations}
    \FloatBarrier

    \input{latex/broader_impact}
    \FloatBarrier

    \input{latex/llm_usage}
    \FloatBarrier

    \hrule height 0.5mm

\end{document}

%% file: packages.tex
\usepackage[utf8]{inputenc}   
\usepackage[T1]{fontenc}    	
\usepackage{url}              		 
\usepackage{booktabs}       	
\usepackage{amsfonts}       	
\usepackage{nicefrac}       	  
\usepackage{microtype}      	

\usepackage{graphicx}
\usepackage{color}
\usepackage{tikz}
\usepackage{rotating}
\usepackage{tabularx}
\usepackage{pdflscape}
\usepackage{amsmath,amsthm,amssymb}
\usepackage{algorithm,algorithmic}
\usepackage{bbm,dsfont}
\usepackage{subfig}
\usepackage{caption}
\usepackage{wrapfig}
\usepackage{placeins}
\usepackage{cancel}
\usepackage{enumerate,cases}
\usepackage{thmtools,thm-restate}
\usepackage{mathtools}

\usepackage{multirow}
\usepackage{makecell}
\usepackage{setspace}
\usepackage{pifont}
\usepackage{array}
\usepackage{enumitem}
\usepackage{lineno}     
\usepackage{titletoc}

\definecolor{stagepurple}{RGB}{118,70,180}
\definecolor{stagegreen}{RGB}{46,139,87}
\DeclareRobustCommand{\stageicon}[2]{%
  \tikz[baseline=(n.base)]{\node[inner sep=0pt, circle, fill=#1, text=white, minimum size=1.05em, font=\tiny\bfseries] (n) {#2};}%
}

\allowdisplaybreaks

\usepackage{hyperref}		 
\hypersetup{
	colorlinks	= true,		 
	urlcolor     = blue,	 
	linkcolor	 = purple, 	 
	citecolor    = violet  
}

\newcommand\DoToC{%
  \startcontents
  \printcontents{}{1}{\textbf{Table of Contents}\vskip3pt\hrule\vskip5pt}
  \vskip3pt\hrule\vskip5pt
}

\usepackage[capitalize]{cleveref}
\crefformat{figure}{Fig.~#2#1#3}       
\crefformat{table}{Tab.~#2#1#3}        
\crefformat{equation}{Eq.~(#2#1#3)}    
\crefformat{section}{Sec.~#2#1#3}      
\crefformat{appendix}{App.~#2#1#3}     
\crefformat{algorithm}{Alg.~#2#1#3}    

\usepackage{todonotes}

\usepackage{newtxtext}
\usepackage{silence}
\theoremstyle{remark}
\newtheorem*{remark}{Remark}

%% file: notations.tex
\newcommand{\safe}{\textnormal{\text{safe}}}
\newcommand{\unsafe}{\textnormal{\text{unsafe}}}
\newcommand{\algo}{\textsc{BarrierSteer}}

\newcommand{\para}[1]{\textbf{#1}~}

\renewcommand{\phi}{\varphi}
\renewcommand{\epsilon}{\varepsilon}

\newcommand{\norm}[1]{\left\|#1\right\|}

\newcommand{\el}{\end{flushleft}}
\newcommand{\bl}{\begin{flushleft}}

\newcommand{\squishlisttwo}{
 \begin{list}{$\bullet$}
  { \setlength{\itemsep}{1pt}
     \setlength{\parsep}{0pt}
    \setlength{\topsep}{0pt}
    \setlength{\partopsep}{0pt}
    \setlength{\leftmargin}{1em}
    \setlength{\labelwidth}{1.5em}
    \setlength{\labelsep}{0.5em} } 
}
\newcommand{\squishend}{\end{list}}

\newcommand{\cC}{\mathcal{C}}

\theoremstyle{plain}

\newtheorem{defi}{Definition}

%% file: latex/introduction.tex

\begin{wrapfigure}[10]{r}{0.51\textwidth}
    \vspace{-16mm}
    \centering
    \includegraphics[width=0.98\linewidth]{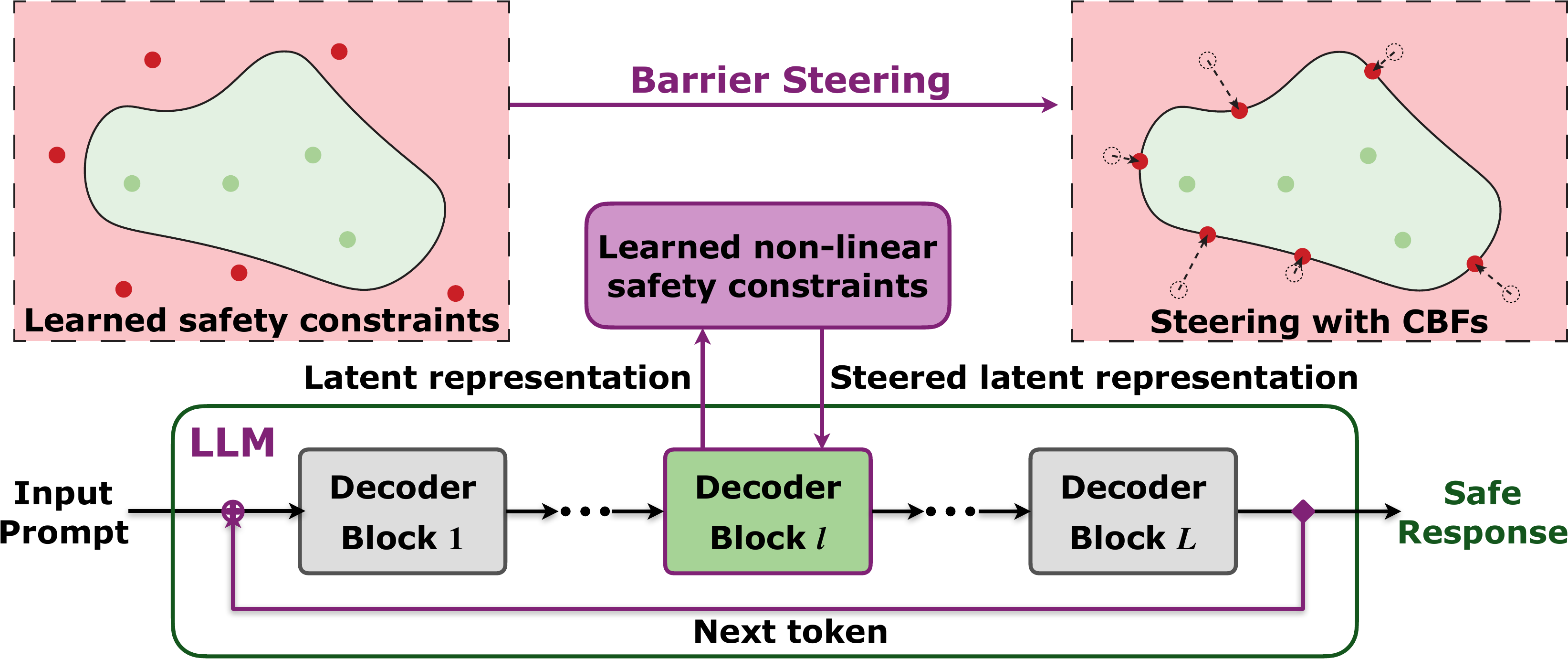}
    \vspace{-1mm}
    \caption{
        \textbf{\algo{} for LLM Safety.} 
        Unlike steering with fixed directions, \algo{} derives adaptive corrections at each token, steering hidden states towards the learned safety boundaries using Control Barrier Functions (CBFs). 
    }
    \label{fig:BarrierSteer}
\end{wrapfigure}
As large language models (LLMs)~\citep{claude2,ArXiv23_google2023palm,ArXiv23_openai2023gpt,ArXiv23_touvron2023llama} become increasingly widespread due to their strong capabilities~\citep{chen2021evaluating,wei2022chain,taori2023alpaca,ArXiv23_zhao2023survey}, their safety risks have become a central concern~\citep{anwar2024foundational,dalrymple2024towards}.
Despite alignment efforts~\citep{NeurIPS22_ouyang2022training,arXiv22_bai2022training}, LLMs can still be manipulated by adversarial attacks~\citep{zou2023universal,ACL24_cao2024defending} to generate harmful content~\citep{EMNLP2020_gehman2020realtoxicityprompts,chua2024ai,zhang2025guardians}.
Such failures can have serious real-world consequences, creating significant legal and ethical risks when deploying LLMs in high-stakes applications~\citep{hung2023walking,fareed2025systematic}.

Existing LLM safety approaches can be broadly grouped into training-time and inference-time defenses. 
Training-time defenses~\citep{NeurIPS22_ouyang2022training,arXiv22_bai2022training,bai2022constitutionalaiharmlessnessai} update model parameters to improve safety, but aligned models can remain vulnerable to adversarial prompts~\citep{zou2023universal,liu2023autodan,shen2024anything,zeng2024johnny}.
Inference-time methods therefore provide a complementary layer of defense, enabling safety interventions without modifying the underlying model parameters~\citep{turner2023steering,sharma2025constitutionalclassifiersdefendinguniversal,cunningham2026constitutional}.

Recent inference-time defenses improve LLM safety but often introduce significant trade-offs.
One line of work detects unsafe outputs from generated text.
Post-hoc filters, such as heuristic keyword-based filters,
are brittle to paraphrases~\citep{wei2023jailbroken,kandpal2023backdoor}.
Language-model classifiers provide stronger detection~\citep{sharma2025constitutionalclassifiersdefendinguniversal, inan2023llamaguardllmbasedinputoutput,han2024wildguardopenonestopmoderation}, but require running an additional model, which can add substantial overhead during deployment.
To reduce this cost, recent work studies classifiers that detect unsafe behavior from intermediate hidden states rather than generated text, enabling low-latency safety detection~\citep{cunningham2026constitutional}.
However, detection identifies unsafe states but not how to steer them toward safer continuations.

A recent adaptive steering approach, \texttt{SaP}, treats classifiers as latent-space
constraints~\citep{ICML2025_chenlearning}.
While promising, \texttt{SaP} is limited to linear classifiers and requires expensive optimization-based steering.
We propose \algo{} to address these limitations by treating learned hidden-state safety classifiers as Control Barrier Functions (CBFs), a standard control-theoretic tool for maintaining dynamical states within safe regions~\citep{Ames2017,Xiao2021TAC2}.
This enables nonlinear safety boundaries, closed-form steering, and principled composition of multiple constraints.
Our key contributions are summarized as follows:

\vspace{-3mm}
\begin{itemize}[leftmargin=*]
    \setlength\itemsep{0.0em}
    \setlength{\itemindent}{2pt}

    \item \textbf{Nonlinear latent safety constraints:}
    We introduce \algo{}, which learns nonlinear hidden-state safety classifiers and treats them as Control Barrier Functions (CBFs), providing a principled bridge between latent safety detection and constraint-based steering.

    \item \textbf{Closed-form CBF steering and composition:}
    We derive efficient closed-form steering updates for both single and composed CBF constraints, including a Log-Sum-Exp composition rule that enables jointly enforcing multiple safety constraints during autoregressive decoding. 

    \item \textbf{Effective low-latency jailbreak defense:}
    We demonstrate empirically across four model families and ten adversarial attacks that \algo{} reduces attack success rates substantially while preserving utility (MMLU, GSM8K) and avoiding the over-refusal observed in fixed-direction steering baselines, with steering latency~58–79$\times$ faster than the closest baseline (SaP).
    
\end{itemize}

%% file: latex/related_work.tex

\para{LLM Safety Techniques.}
Improving the safety of LLMs has become a central challenge in modern AI research~\citep{anwar2024foundational,dalrymple2024towards}.
A major line of work addresses this challenge through training-time alignment, where model parameters are updated using safety, preference, or refusal data.
Representative methods include RLHF~\citep{NeurIPS22_ouyang2022training,arXiv22_bai2022training,bai2022constitutionalaiharmlessnessai}, \texttt{Safe-RLHF}~\citep{dai2023safe}, \texttt{DPO}~\citep{arXiv23_rafailov2023direct,zhou2024beyond,liu2024enhancing}, adversarial training~\citep{ganguli2022redteaminglanguagemodels}, and \texttt{Circuit Breakers}~\citep{zou2024improving}, which incorporate safety through preference optimization, constrained reward modeling, adversarial data, or representation engineering.
However, aligned models can remain vulnerable to adversarial or jailbreak prompts~\citep{zou2023universal,liu2023autodan,shen2024anything,zeng2024johnny}.
This motivates inference-time safety mechanisms that can intervene during deployment without changing the base model parameters.

\para{Inference-time Safety.}
One line of inference-time safety work detects unsafe outputs after generation.
Simple post-hoc filters, such as keyword- or rule-based filters, are brittle to paraphrases and obfuscation~\citep{wei2023jailbroken,kandpal2023backdoor}.
Language-model classifiers provide stronger detection~\citep{sharma2025constitutionalclassifiersdefendinguniversal,inan2023llamaguardllmbasedinputoutput,han2024wildguardopenonestopmoderation}, but require running an additional model, which can add substantial overhead during deployment.
To reduce this cost, recent work studies classifiers that detect unsafe behavior from intermediate hidden states rather than generated text, enabling low-latency safety detection on frontier models~\citep{cunningham2026constitutional}.
However, these defenses primarily block unsafe generations; they do not steer the model toward a safe alternative.

Another line of work steers LLMs by intervening on model activations during generation.
Activation-addition methods modify hidden states along fixed directions~\citep{turner2023steering,arditi2024refusal,rimsky2024steering}.
While these interventions are simple and efficient, they can be coarse, applying the same direction across contexts and may degrade utility.
A more adaptive approach is to use learned safety classifiers as steering signals~\citep{lee2025programming}.
Motivated by hidden-state classifiers for production-grade safety detection~\citep{cunningham2026constitutional}, classifier-guided methods use classifier scores not only to detect unsafe states, but also to guide how those states should be modified.
Methods such as \texttt{SaP} steer using latent-space constraints~\citep{ICML2025_chenlearning}, but existing approaches remain limited by linear classifier assumptions or by expensive constrained optimization and perturbation-search procedures~\citep{ICML2025_chenlearning,kong2024aligning,karnik2025preemptivedetectionsteeringllm,zhao2026odesteer}.
In contrast, \algo{} learns nonlinear latent-space safety constraints, derives closed-form steering rules, and efficiently composes multiple constraints.
We defer further discussion of inference-time safety and classifier-guided steering to Appendix~\ref{appendix:additional-related-work}.

\para{Control Barrier Functions for Formal Safety.}
Control Barrier Functions (CBFs)~\citep{Ames2017,Xiao2021TAC2,Glotfelter2017} are a standard tool for enforcing safety in dynamical systems.
CBFs turn nonlinear safety conditions into quadratic programs that can be solved efficiently online, providing a practical safety mechanism with formal guarantees under appropriate assumptions.
This makes them especially appealing for learning-based control, where learned policies can be paired with explicit safety mechanisms rather than relying only on soft constraints~\citep{achiam2017constrained,tessler2018reward}.
Advances in differentiable optimization~\citep{Amos2017} have enabled CBFs to be integrated into neural networks as safety-guaranteed layers~\citep{pereira2020,liu2023learning,Xiao2021bnet}, primarily for robotic control, and CBFs have also been extended to generative models such as LLM alignment~\citep{zhao2026odesteer} and diffusion models~\citep{xiao2023safediffuser}.
However, most CBF-based methods assume that the safety constraint is known \emph{a priori}, which is often unrealistic for complex systems such as LLMs.
Recent work has explored using CBFs to steer LLMs~\citep{miyaoka2025control}, but derives barriers from text-level safety classifiers and relies on constrained optimization during decoding, which can introduce substantial inference overhead.
In contrast, \algo{} learns latent-space safety constraints and derives closed-form rules for efficient inference-time steering.
Since these constraints are learned from data, the resulting guarantees are conditional on the learned barriers accurately capturing the intended safety property~\citep{robey2020learning, so2024train}.

%% file: latex/problem.tex

This section provides the background and motivation for \algo{}.
We first review how prior work learns hidden-state safety classifiers and uses their decision boundaries as safety constraints.
We then discuss why using such constraints to steer LLMs remains practically challenging.
Finally, we introduce Control Barrier Functions, a standard tool from control theory, which \algo{} uses to transform learned nonlinear safety constraints into efficient inference-time steering rules.

Throughout the paper, we use \emph{classifier}, \emph{constraint}, and \emph{barrier function} depending on how the learned safety function is used.
It is a \emph{classifier} when distinguishing safe from unsafe hidden states, a \emph{constraint} when controlling LLM generation, and a \emph{barrier function} when used in the CBF condition.

\subsection{Steering LLMs using Latent Classifiers}
\label{sec:latent-classifier-steering}

\para{Learning Latent Classifiers for LLM Safety.}
Performing safety classification directly in text space often requires an external language model classifier~\citep{sharma2025constitutionalclassifiersdefendinguniversal}, which can incur high computational cost when applied during generation.
Recent work suggests that hidden-state classifiers can provide faster safety detection, with promising results in production-grade settings~\citep{cunningham2026constitutional}.
We follow prior work~\citep{cunningham2026constitutional,ICML2025_chenlearning} and use the standard hidden-state classifier training pipeline with binary safety data.
Concretely, we are given a labeled dataset $\mathcal{D}=\{(q^{(i)},x^{(i)},y^{(i)})\}_{i=1}^{N}$, where $q^{(i)}$ is the user query and $x^{(i)}$ is the LLM response.
Each response $x^{(i)}$ consists of $T^{(i)}$ tokens, $x^{(i)}=(x_{1}^{(i)},\ldots,x_{T^{(i)}}^{(i)})$, with a binary response-level label $y^{(i)}\in\{0,1\}$, where $y^{(i)}=1$ indicates the presence of unsafe content.
These labels are obtained from human annotations or automated safety classifiers~\citep{mazeika2024harmbench}.

To construct the latent training set, we run the LLM
on each query--response pair and collect hidden states. Define the response-prefix state $s_t^{(i)}=(q^{(i)},x_{\leq t}^{(i)})$.
Let $h^{(\ell)}(s_t^{(i)})$ denote the hidden state corresponding to the response token $x_t^{(i)}$ at layer $\ell$.
This hidden state contains information about the query $q^{(i)}$ and the response prefix $x_{\leq t}^{(i)}$, rather than only the current token $x_t^{(i)}$~\citep{vaswani2017attentionneed,devlin2019bertpretrainingdeepbidirectional,radford2019language}.
Crucially, because the supervision is assigned at the response level, each response $x^{(i)}$ yields $T^{(i)}$ hidden-state examples with the same label $y^{(i)}$: $\{(h^{(\ell)}(s_t^{(i)}),y^{(i)})\}_{t=1}^{T^{(i)}}$. 
Thus, an unsafe response contributes unsafe hidden-state examples across its generated positions, even though the original annotation does not identify a specific unsafe token span. 
These hidden-state examples are then used to learn latent safety classifiers whose decision functions define safety boundaries in hidden-state space.

\para{Steering LLMs using Safety Constraints.}
Recent work, such as \texttt{SaP}~\citep{ICML2025_chenlearning}, suggests explicitly viewing learned latent classifiers as safety constraints.
Concretely, whenever a hidden state $h$ is predicted to violate the learned constraint, \texttt{SaP} computes a projected safe representation by optimizing:
$$
    {{\min_{h^\star} \|h-h^\star\| \text{~s.t.~} h^\star \text{ satisfies the learned safety constraints.}}}
$$

This approach combines the detection strength of latent classifiers~\citep{cunningham2026constitutional} with activation steering, which adjusts LLMs toward safer generation~\citep{turner2023steering,arditi2024refusal}.
However, \texttt{SaP} has several \textit{limitations}:

\vspace{-3mm}
\begin{itemize}[leftmargin=*]
	\setlength\itemsep{0.0em}
    \setlength{\itemindent}{2pt}
    \item \textbf{Linearity:} \texttt{SaP} uses linear constraint boundaries, restricting classifier design.
    \item \textbf{Expensive optimization:} Applying the constraints requires optimization using gradient descent at inference time, so each steering step adds extra computation and increases generation latency.
    \item \textbf{Limited modularity:} Composing existing constraints or adding new ones may require retraining.

\end{itemize}

These limitations of SaP motivate \algo{}: a principled steering method that handles nonlinear latent constraints, derives closed-form updates, and composes safety constraints from independently trained classifiers.
Empirically, \algo{} improves adversarial defense while preserving utility (\cref{sec:exp_adv_attack}), reduces steering latency (\cref{sec:exp_latency}), and enables modular safety composition (\cref{sec:exp_compose_cnf}).

\subsection{Steering LLMs with Control Theory}
\label{sec:control-theory-steering}

\para{Autoregressive Generation as a Latent Dynamical System.}
Autoregressive language generation naturally induces a trajectory in the model's latent space~\citep{kong2024aligning}.
At decoding step $t$, the prefix state $s_t=(q,x_{\leq t})$ is processed by the LLM, which produces an intermediate hidden state $h^{(\ell)}(s_t)$.
The model then samples the next token $x_{t+1}$,
forming the updated state $s_{t+1}=(q,x_{\leq t+1})$.
This induces a stochastic transition over latent representations, $h^{(\ell)}(s_t)\mapsto h^{(\ell)}(s_{t+1})$, allowing autoregressive generation to be viewed as a latent dynamical process.
This view frames inference-time LLM safety as keeping latent trajectories within, or steering them toward, prescribed safe regions.

CBFs provide a natural mechanism for this purpose: a learned safety classifier defines a safe set in the hidden-state space, and the CBF condition determines local corrections to steer 
the trajectory away from unsafe regions during decoding.
\algo{} instantiates this idea by treating hidden states as latent system states and learned safety classifiers as CBF constraints.

\begin{figure*}
    \vspace{-4mm}
    \centering
    \includegraphics[width=\linewidth]{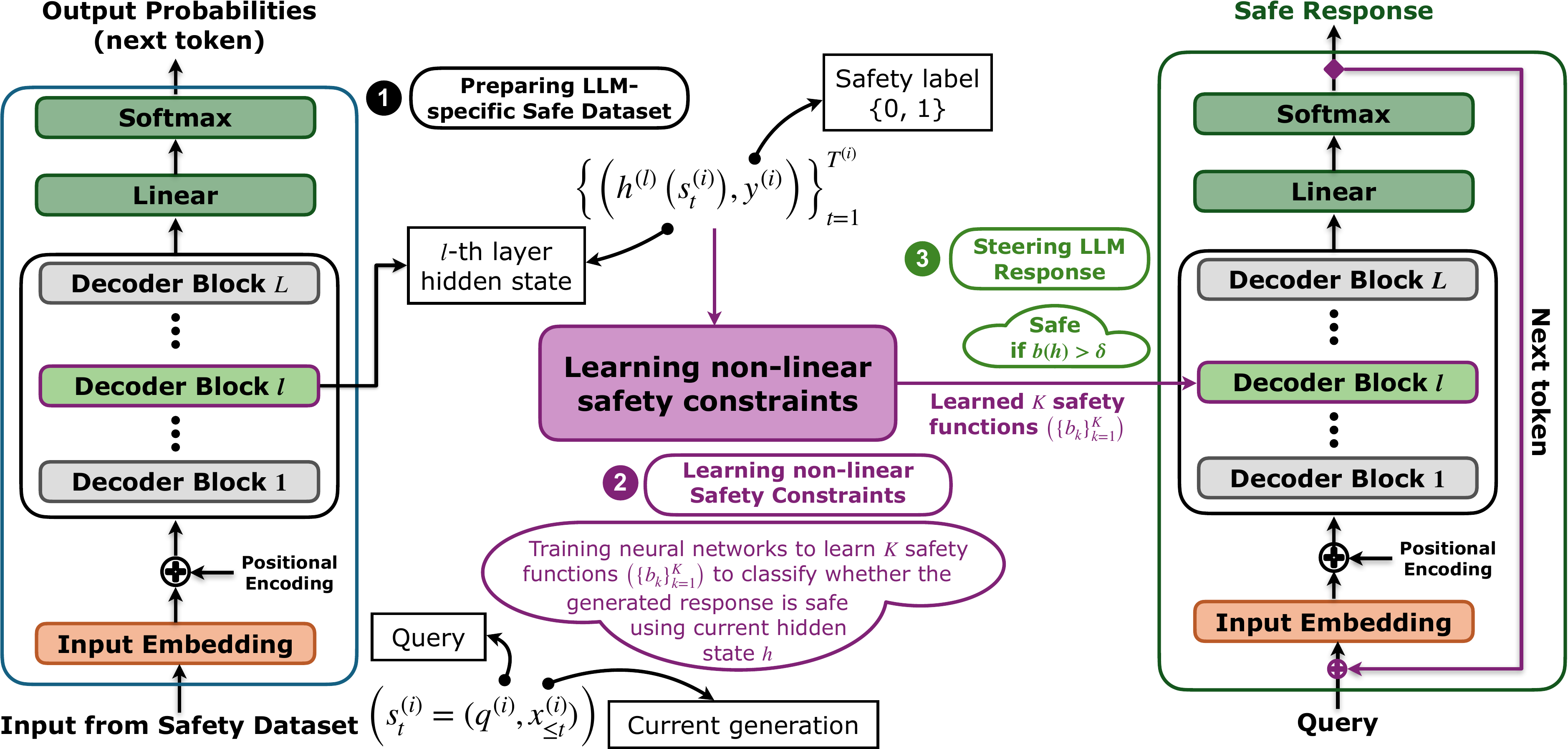}
    \caption{
        \textbf{Overview of \algo{}.}
        \algo{} follows a three-stage pipeline:
        \protect\stageicon{black}{1} extracting intermediate LLM hidden states
        and pairing them with binary safety labels
        (\cref{sec:latent-classifier-steering});
        \protect\stageicon{stagepurple}{2} learning nonlinear latent safety classifiers
        that define hidden-state safe sets
        (\cref{sec:learning_cbf});
        and \protect\stageicon{stagegreen}{3} applying closed-form CBF-based steering
        to guide hidden states toward learned safe sets
        (\cref{sec:steering,sec:composing_cbf}).
    }
    \label{fig:SafeLLM}
    \vspace{-6mm}
\end{figure*}

\para{Control Barrier Functions.}
CBFs provide a principled framework for enforcing safety in dynamical systems.
We consider affine control systems with state $h \in \mathbb{R}^n$, control input $u \in \mathbb{R}^m$, and dynamics
$$
    \dot{h} = f(h) + g(h)u,
$$
where $f$ and $g$ are assumed to be locally Lipschitz.
The goal is to ensure that the state $h$ remains within a prescribed safe set $\mathcal{C}$, defined as the $0$-superlevel set of a continuously differentiable function $b : \mathbb{R}^n \to \mathbb{R}$:
$$
    \mathcal{C} = \{h \in \mathbb{R}^n : b(h) \geq 0\}.
$$

\begin{defi}[Control Barrier Function]
    A function $b$ is a control barrier function for the system on $\mathbb{R}^n$ if there exists an extended class-$\mathcal{K}$ function $\beta$ such that
$$
    \forall h \in \mathbb{R}^n:
    \sup_{u \in \mathbb{R}^m} \dot{b}(h,u)
    \geq -\beta(b(h)).
$$
\end{defi}

Since $\dot{b}(h, u) = L_f b(h) + L_g b(h)u$ by the chain rule, this condition can be equivalently written as
$$
    \sup_{u \in \mathbb{R}^m}
    \left[
        L_f b(h) + L_g b(h)u + \beta(b(h))
    \right] \geq 0.
$$
Here, $L_f b(h)$ and $L_g b(h)$ denote the Lie derivatives of $b$ along the drift and control dynamics, respectively.
The term $\beta(b(h))$ shapes how aggressively the system must react as it approaches the boundary of the safe set.

Given a CBF $b$, any control input $u$ satisfying $\dot{b}(h,u) \geq -\beta(b(h))$ for all $h \in \mathcal{C}$ renders $\mathcal{C}$ forward invariant: trajectories starting in $\mathcal{C}$ remain in $\mathcal{C}$ under standard regularity assumptions~\citep{Ames2017,Xiao2021TAC2}.
For simplicity, we use a linear class-$\mathcal{K}$ function throughout this paper, i.e., $\beta(b(h)) = \alpha b(h), \alpha \in \mathbb{R}_{>0}$.

Notably, CBFs can enforce nonlinear safety constraints, admit efficient closed-form solutions, and are composable.
These properties match the needs of LLM activation steering: expressive safety boundaries, efficient online intervention, and modular composition of multiple constraints.

%% file: latex/algorithm.tex

In this section, we present our \algo{} framework for learning latent safety constraints and applying them as CBFs during inference.
As illustrated in \cref{fig:SafeLLM}, \algo{} follows a three-stage pipeline: 
\stageicon{black}{1} extracting intermediate latent representations from a pre-trained LLM and constructing a dataset with binary safety labels (see~\cref{sec:latent-classifier-steering});
\stageicon{stagepurple}{2} learning expressive nonlinear safety constraints in latent space; and
\stageicon{stagegreen}{3} applying CBF-based steering at inference time to guide hidden-state trajectories toward the learned safe set without modifying the underlying model parameters.

\subsection{Learning Control Barrier Functions}
\label{sec:learning_cbf}
A key challenge in applying CBFs to LLMs is that safety constraints are not known \emph{a priori} and must instead be inferred from data.
In contrast to prior approaches that rely on linear boundaries~\citep{ICML2025_chenlearning}, we learn nonlinear safety constraints directly using pairs of hidden states and safety labels.
We estimate safety boundaries by learning barrier functions $\{b_k\}_{k=1}^K$ with neural network parameters $\theta=\{\theta_1,\ldots,\theta_K\}$.
Given safe and unsafe hidden states from $\mathcal{H}_{\mathrm{safe}}$ and $\mathcal{H}_{\mathrm{unsafe}}$, we jointly train the barriers with a CBF-style classification loss~\citep{robey2020learning,so2024train} consisting of two terms:
$$
    \mathcal{L}(\theta) = \mathcal{L}_{\safe}(\theta) + \lambda_{\unsafe} \mathcal{L}_{\unsafe}(\theta),
$$
where the parameter $\lambda_{\text{unsafe}} \in \mathbb{R}_{>0}$ is a hyperparameter that controls the penalty associated with safety violations.
The \textit{safe set loss} makes safe states satisfy \textit{all} constraints ($ b_k(h) \geq 0, \forall k \in \{1, \dots, K\} $):
$$
    \mathcal{L}_{\safe}(\theta) = \sum_{h \in \mathcal{H}_{\safe}} \sum_{k=1}^K [- b_{k}(h)]_+.
$$

Conversely, the \textit{unsafe set loss} makes unsafe states violate \textit{at least one} constraint ($\min_k b_{k}(h) \leq -\epsilon$):
$$
    \mathcal{L}_{\unsafe}(\theta) = \sum_{h \in \mathcal{H}_{\unsafe}} [\min_{k} b_{k}(h) + \epsilon]_+.
$$
In these expressions, the operator $[ \cdot ]_+ = \max(0, \cdot)$ denotes the hinge loss, and $\epsilon$ defines the safety margin.
Minimizing $\mathcal{L}(\theta)$ encourages the intersection of the learned safety constraints to correctly classify safe and unsafe states. Moreover, distinct control barrier functions can be learned from separate datasets and composed, as discussed in Section~\ref{sec:composing_cbf}.

\subsection{Representation Steering with Latent Dynamics}
\label{sec:steering}
Once the safety constraints are learned, we can use them to steer the LLM generation.
Obtaining $b_{k}$ allows us to check whether the current latent state satisfies the safe set $\cC = \{h \mid b_{k}(h) \geq \delta, k\in \{1,\dots,K\}\}$, where $\delta$ denotes the safety threshold. In practice, we set $\delta = 0$ for simplicity.
If the model's original trajectory violates the safety margin (i.e., $\exists k, b_{k}(h) < \delta$), we dynamically adjust the latent state $h$ within the generation loop to steer it toward the learned safe region.

To steer using CBFs, we approximate the local evolution of the hidden state as $\dot{h} = (h_t - h_{t-1})/\Delta t$, where $\Delta t$ is the time step (default $\Delta t = 1$).
Following the local-dynamics view in \cref{sec:control-theory-steering}, this finite difference is used only to define local dynamics, not to predict the full future trajectory.
By modeling the latent-state evolution locally as a controllable dynamical system $\dot{h} = u$, we seek a control input $u$ that satisfies the local first-order CBF condition while minimizing deviation from the original trajectory.
Specifically, we compute the control $u$ by solving the following Quadratic Program (QP):
\begin{equation}
    \label{eqn:qp}
    u^* = \underset{u}{\text{argmin}} \ \norm{u - \frac{h_t - h_{t-1}}{\Delta t}}^2 \text{~s.t.~}
    \nabla b_{k}({h_t})^\top u \!+\! \alpha(b_{k}({h_t})-\delta) \geq 0, ~k\in \{1,\dots, K\}.
\end{equation}
The steered hidden state is $h'_t = h_{t-1} + u^*\Delta t$, which replaces $h_t$ in the decoding loop.
This avoids expensive optimization because the QP can admit efficient closed-form solutions, as discussed in \cref{sec:composing_cbf}.
Furthermore, the parameter $\alpha$ in \cref{eqn:qp} controls the steering strength. A larger value of $\alpha$ produces a more aggressive correction toward the learned latent safe set $\mathcal{C}$. It may also induce a significant deviation from the nominal control input. This fundamental trade-off between safety and task performance is empirically validated across various benchmarks, as reported in Table~\ref{tab:ablation_alpha}.

\subsection{Closed-form Steering via Composing CBFs}
\label{sec:composing_cbf}
Real-world safety specifications often involve multiple constraints (e.g., ``avoid harm'' and ``protect privacy'').
We formalize such requirements as satisfying all $K$ constraints in \cref{eqn:qp}.
However, directly solving the quadratic program is inefficient, since the control dimension matches the hidden-state dimension and is typically large.
To address this, we derive a closed-form steering rule.

Recall that $\{b_{k}(h)\}_{k=1}^K$ is a set of learned CBF boundaries corresponding to safe sets $\mathcal{C}_k$.
A system satisfying all constraints must remain in the intersection $ \bigcap_{k=1}^K \mathcal{C}_k $.
We merge these constraints into a single, continuously differentiable CBF.
The intersection of safe sets can be captured by a smooth under-approximation of the minimum function using the Log-Sum-Exp (LSE) formula~\cite{Boyd2004}:
\begin{equation} \label{eqn:compose}
    B(h) = -\frac{1}{\kappa} \ln \left( \sum_{k=1}^{K} e^{-\kappa (b_{k}(h)-\delta)} \right),
\end{equation}
where $\kappa \in \mathbb{R}_{>0}$ is a smoothing parameter.
Notably, this composed barrier $B(h)$ remains continuously differentiable, and the satisfaction of $B(h)\geq 0$ implies the satisfaction of $b_{k}(h)-\delta \geq 0, \forall k\in\{1,\dots, K\}$ \cite{Boyd2004}.
When the number of active constraints is at most 2 (either $K \leq 2$ originally, or after composing via~\cref{eqn:compose}), the QP in~\cref{eqn:qp} admits the following closed-form solution for $u$:
\begin{equation}
    \label{eqn:cf}
    u^* =  \frac{h_t - h_{t-1}}{\Delta t} + \lambda_1({h_t})\hat g_1({h_t}) + \lambda_2({h_t})\hat g_2({h_t}),
\end{equation}
where $\lambda_1, \lambda_2, \hat g_1,$ and $\hat g_2$ are defined in Appendix~\cref{sec:cf}.

\begin{remark}
	\label{rem:local-assumptions}
	We use the following local assumptions to apply the CBF for steering:
	(i) each learned barrier $b_k$ and the composed barrier $B$ in~\cref{eqn:compose} are continuously differentiable; and
	(ii) over the one-step neighborhood where steering is applied, the hidden-state evolution is locally Lipschitz and is well approximated by the finite-difference model $\dot{h} = (h_t - h_{t-1})/\Delta t$ used in~\cref{eqn:qp}.
	
	These assumptions are reasonable in our setting.
	First, our neural barrier heads use smooth GELU-based MLPs, and the LSE composition is smooth for $\kappa>0$ (Appendix~\cref{appendix:barrier-implementation-details}), so the required gradients are available by design.
	Second, the dynamics assumption is local and one-step: at each decoding step, the finite-difference model $\dot{h} = (h_t - h_{t-1})/\Delta t$ in~\cref{eqn:qp} serves as a first-order approximation for choosing a correction direction near the current hidden state.
	We do not assume it globally describes transformer decoding over an entire trajectory; instead, \algo{} re-evaluates the barrier at every decoding step and applies at most one closed-form steering update per token when the composed barrier condition is violated.
	Such local approximations have proven effective for CBF-style control in other generative models, including diffusion models~\citep{xiao2023safediffuser}.
\end{remark}

We formalize the safety property of the steering in~\cref{eqn:cf} under the LSE composition in~\cref{eqn:compose}.
\begin{restatable}{thm}{SafeGuarantee}
    \label{thm:safe}
    Let $\{b_k(h)\geq \delta\}_{k=1}^K$ be learned safety constraints, and let
    $B(h)$ denote the smooth composed barrier in~\cref{eqn:compose}, with
    composed safe set $\cC_B = \{h \mid B(h) \geq 0\}$.
    Let $h$ and $h'$ denote the hidden states before and after applying the
    control $u^*$ in~\cref{eqn:cf}.
    Under the local assumptions stated above, the update satisfies the composed CBF condition
    $\nabla B(h)^\top u^* + \alpha B(h) \geq 0$.
    Consequently, under the local dynamics, $\cC_B$ is locally forward invariant:
    if $h\in\cC_B$, the composed barrier condition is locally preserved; if
    $h\notin\cC_B$, the composed-barrier violation is exponentially stabilized
    toward $\cC_B$.
\end{restatable}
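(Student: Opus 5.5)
The plan is to reduce everything to the single composed barrier $B$ and use the explicit form of the minimum-norm QP solution.

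\textbf{Step 1: properties of $B$.}
By assumption (i) in the Remark, $B$ is continuously differentiable. Its gradient is
\[
\nabla B(h)=\sum_k w_k(h)\nabla b_k(h),\qquad w_k=\frac{e^{-\kappa(b_k-\delta)}}{\sum_j e^{-\kappa(b_j-\delta)}} .
\]
The weights $w_k$ are nonnegative and sum to one. From the log-sum-exp bound we also get
\[
B(h)\le \min_k\bigl(b_k(h)-\delta\bigr).
\]
Hence $B(h)\ge 0$ implies $b_k(h)\ge\delta$ for every $k$, so $\cC_B\subseteq\bigcap_k\cC_k$. This is the fact quoted from \cite{Boyd2004}, which I will simply invoke.

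\textbf{Step 2: the QP after composition.}
After composition there is a single affine constraint $a^\top u+c\ge 0$, with $a=\nabla B(h_t)$ and $c=\alpha B(h_t)$. The nominal control is $u_0=(h_t-h_{t-1})/\Delta t$. I will verify, or simply rederive, that the closed form in \cref{eqn:cf} with one active constraint reduces to the standard projection
\[
u^*=u_0+\lambda\,\frac{a}{\|a\|},\qquad \lambda=\frac{\max\bigl(0,\,-(a^\top u_0+c)\bigr)}{\|a\|}.
\]
This follows from the KKT conditions of $\min\|u-u_0\|^2$ subject to a single half-space constraint.

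\textbf{Step 3: check the CBF inequality by cases.}
\begin{itemize}
\item If $a^\top u_0+c\ge 0$, then $u^*=u_0$ and the inequality holds trivially.
\item Otherwise, substituting $u^*$ gives
\[
a^\top u^*+c=a^\top u_0+c-(a^\top u_0+c)=0\ge 0.
\]
\end{itemize}
This is exactly $\nabla B(h)^\top u^*+\alpha B(h)\ge 0$. If the paper's $\lambda_1,\hat g_1$ are written differently, I would just match them to this projection formula.

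\textbf{Step 4: the invariance and stabilization claims.}
This is the step I expect to be the main obstacle, because the rigour depends on how "local" is interpreted. Under assumption (ii), the local dynamics are $\dot h=u^*$, so along the local flow
\[
\dot B=\nabla B^\top u^*\ge-\alpha B.
\]
Applying the comparison lemma to $\dot B\ge -\alpha B$ gives
\[
B(h(\tau))\ge B(h(0))\,e^{-\alpha\tau}.
\]
Local Lipschitzness of $u^*$ in $h$, which needs $\nabla B\ne 0$ near the point, gives existence and uniqueness of this flow. Two consequences follow:
\begin{itemize}
\item If $B(h)\ge 0$, then $B$ stays nonnegative, which is Nagumo-type forward invariance.
\item If $B(h)<0$, the violation $-B$ is bounded by $|B(h(0))|e^{-\alpha\tau}$, so it decays exponentially toward $\cC_B$.
\end{itemize}

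\textbf{Caveats.}
The discrete update $h'=h_{t-1}+u^*\Delta t$ only inherits these properties to first order, through the Taylor expansion
\[
B(h')\approx B(h)+\nabla B^\top(h'-h).
\]
I will state the result as holding under the finite-difference approximation rather than claim exact discrete invariance. I will also note that $\nabla B\neq 0$ is needed for $u^*$ to be well defined.
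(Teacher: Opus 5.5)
Your proposal is correct and follows essentially the same route as the paper's proof. In both, $u^*$ is the exact minimizer of the single-constraint QP, so $\nabla B(h)^\top u^* + \alpha B(h)\ge 0$; the log-sum-exp bound $B(h)\le\min_k(b_k(h)-\delta)$ gives $\mathcal{C}_B\subseteq\bigcap_k\mathcal{C}_k$; and the comparison lemma applied to $\dot B\ge-\alpha B$ gives invariance when $B(h)\ge 0$ and exponential decay of $-B$ when $B(h)<0$ (the paper writes this last case via $V=-B$).

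Your extra care refines the paper's argument rather than changing it: you derive the projection from the KKT conditions, read the local dynamics as a closed-loop flow so the comparison lemma genuinely applies, and flag both $\nabla B(h)\neq 0$ and the first-order-only validity of the discrete update. Deriving from KKT instead of matching the appendix's case formulas is the right call. As transcribed there, the one-constraint case $\lambda_1=[\hat h_1]_+/G_{11}$ with $\hat g_1=-\nabla B$ leaves $u^*=u_0$ exactly when $\hat h_1<0$, i.e., when the constraint is violated.
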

The proof of~\cref{thm:safe} is provided in Appendix~\ref{sec:proof}.

\cref{thm:safe} justifies using the composed barrier in~\cref{eqn:compose} for steering under multiple constraints, since $B(h)\geq 0$ is a sufficient condition for satisfying all $K$ constraints.
The guarantee, however, is conditional on the learned CBFs accurately capturing the intended text-level safety properties.

The closed-form steering in~\cref{eqn:cf} leads to three practical variants of \algo{}:
\vspace{-3mm}
\begin{itemize}[leftmargin=*]
    \setlength\itemsep{0.0em}
    \setlength{\itemindent}{2pt}
    \item \textbf{\algo{} (QP):} Directly solves the QP in \cref{eqn:qp}, which can be computationally expensive.
    \item \textbf{\algo{} (Top-2):} A selection heuristic that retains the two most violated constraints, reducing the active set to $K=2$, and then applies the closed-form solution in~\cref{eqn:cf}.
    \item \textbf{\algo{} (LSE):} Composes all constraints into a single CBF using the LSE formula in \cref{eqn:compose}, then applies the closed-form solution in \cref{eqn:cf} with $K=1$.
\end{itemize}
\vspace{-2mm}

In practice, we use \algo{} (LSE) as the default variant because it is theoretically grounded by \cref{thm:safe}, computationally efficient, and empirically effective.

%% file: latex/experiment.tex

\para{Datasets and Evaluation Setup.}
We evaluate \algo{} on two safety benchmarks using instruction-tuned LLMs that have undergone safety alignment but remain vulnerable to adversarial prompts.
First, we use \texttt{HarmBench}~\citep{mazeika2024harmbenchstandardizedevaluationframework}, a standardized framework for automated adversarial-prompt generation.
We select 320 harmful behaviors to construct training cases for learning the safety barriers and reserve 80 disjoint behaviors for evaluation.
This behavior-level split tests whether the method generalizes to unseen harmful behaviors rather than memorizing specific prompts.

Second, we use \texttt{WildGuardMix}~\citep{han2024wildguardopenonestopmoderation}, a large-scale dataset covering harmful prompts across 14 risk categories.
For each category, we sample 400 prompts and split them into 320 for training and 80 for evaluation.
We use these prompts to collect safe and unsafe activation datasets, denoted by $\mathcal{H}_{\mathrm{safe}}$ and $\mathcal{H}_{\mathrm{unsafe}}$, respectively.
As discussed in \cref{sec:latent-classifier-steering}, each response $x^{(i)}$ with length $T^{(i)}$ yields $T^{(i)}$ hidden-state examples, so both datasets $\mathcal{H}_{\mathrm{safe}}$ and $\mathcal{H}_{\mathrm{unsafe}}$ contain substantially more examples than the number of prompts (see~\cref{appendix:hidden-state-counts} for exact dataset sizes).
For both datasets, we quantify safety using the HarmBench classifier~\citep{mazeika2024harmbench}, which evaluates the semantic safety of generated responses.

\textbf{Baselines.}
We compare \algo{} against a diverse set of LLM safety methods, focusing on inference-time steering baselines.
First, we include \texttt{Circuit Breaker}~\citep{zou2024improving}, a training-time representation-engineering method that fine-tunes LoRA-style adapters to reroute internal representations associated with harmful generations, while preserving benign behavior through a retain loss.
We also compare against \texttt{Self-Reminder}, a prompting-based inference-time defense that prepends and appends safety reminders to the user prompt to encourage the model to follow safety policies.

Our main comparisons focus on activation steering methods, which directly modify hidden states during generation.
\texttt{Activation Addition}~\citep{turner2023steering,rimsky2024steering} constructs a steering direction from the difference between mean activations from two contrastive sets,
$
    r = |\mathcal{H}_{\safe}|^{-1}\sum_{h \in \mathcal{H}_{\safe}} h - |\mathcal{H}_{\unsafe}|^{-1}\sum_{h \in \mathcal{H}_{\unsafe}} h,
$
and applies the intervention $h' = h + \alpha r$, where $\alpha$ controls the steering strength.
\texttt{Directional Ablation}~\citep{arditi2024refusal} removes the component of the hidden state along an identified steering direction, $h' = h - r(r^\top r)^{-1}r^\top h$.
We also compare against \texttt{ReFT-r1}~\citep{wu2025axbenchsteeringllmssimple}, a weakly supervised rank-one representation-finetuning baseline that applies a learned linear intervention to hidden representations during generation.
Unlike difference-in-means steering, its intervention direction is learned from supervision rather than computed directly from activation mean differences.

Finally, we evaluate against \texttt{SaP}~\citep{ICML2025_chenlearning}, the closest baseline to \algo{} because it also uses learned safety boundaries as constraints for inference-time steering.
Like \algo{}, \texttt{SaP} learns latent safety constraints from hidden states and safe/unsafe supervision.
However, \algo{} differs from \texttt{SaP} in both the expressiveness of the learned constraint and the efficiency of the steering rule, which directly affects the safety--utility trade-off and latency observed in our experiments.
By default, we use \algo{} (LSE), with each classifier implemented as a five-layer MLP
matched to \texttt{SaP} in parameter count.
For fairness, all learned baselines use the same safe/unsafe data to train their LoRA adapters, steering directions, representation interventions, or latent constraints, and all steering methods intervene at layer $\ell=20$, following prior work~\citep{ICML2025_chenlearning, arditi2024refusal}. We provide implementation details for \algo{} and the baselines in~\cref{appendix:barrier-implementation-details,sec:implementation-details-baselines}, respectively.

\vspace{-4mm}
\subsection{Adversarial Attack Defense}
\label{sec:exp_adv_attack}
\vspace{-2mm}
We assess the robustness of \algo{} against representative adversarial attack methods.
These include gradient-based optimization attacks such as
\texttt{GCG}~\citep{zou2023universal},
\texttt{GBDA}~\citep{guo2021gradient},
\texttt{AutoPrompt}~\citep{shin2020autoprompt},
\texttt{PEZ}~\citep{wen2023hard},
\texttt{UAT}~\citep{wallace2019universal}, and
\texttt{AutoDAN}~\citep{liu2023autodan};
human-crafted jailbreaks, including
\texttt{HumanJailbreak}~\citep{shen2024anything} and
\texttt{DirectRequest}~\citep{mazeika2024harmbenchstandardizedevaluationframework};
and the search-based \texttt{PAP}~\citep{zeng2024johnny}.
We use these nine attack methods to construct the training data.
We further evaluate against \texttt{Adaptive Attack}~\citep{andriushchenko2025jailbreaking}, which explicitly optimizes against defended LLMs.
\texttt{Adaptive Attack} is used only for evaluation, not training.
Additional experimental details and results are provided in \cref{appendix:adv-attack}.

\para{Over-refusal.} We use XSTest~\citep{rottger2024xstest} to evaluate whether defended models over-refuse benign prompts.

\para{Utility Benchmarks.}
To ensure representation steering does not significantly degrade core model utility, we evaluate zero-shot on two standard benchmarks:
\texttt{MMLU}~\citep{hendrycks2020measuring} for general knowledge and \texttt{GSM8K}~\citep{cobbe2021trainingverifierssolvemath} for mathematical reasoning. We refer to~\cref{appendix:extended-utility-results} for results on more utility benchmarks.

Table~\ref{tab:joint_results} summarizes results across four model families, covering static HarmBench ASR, adaptive-attack ASR, over-refusal, and utility.
Overall, \algo{} (LSE) achieves the strongest static jailbreak defense, reducing ASR to near zero across models while preserving utility close to the original LLMs.
Since all learned baselines are trained on the same HarmBench safe/unsafe data, the lower ASR suggests that \algo{} more effectively captures the latent safety boundary and applies the learned constraints.
On static HarmBench ASR, \algo{} outperforms prompt-based defenses such as \texttt{Self-Reminder}, training-time methods such as \texttt{Circuit Breaker}, and inference-time representation-steering baselines such as \texttt{ActAdd}, \texttt{DirAbl}, \texttt{ReFT-r1}, and \texttt{SaP}.

The comparison also highlights the safety--utility trade-offs of existing baselines.
Fixed-direction steering methods can mitigate attacks, but often degrade benign behavior or downstream utility because the same intervention is applied across diverse contexts.
For example, \texttt{DirAbl} reduces the adaptive-attack ASR of \texttt{Gemma-2-9B} to $32.00\%$, but lowers GSM8K accuracy from $74.22\%$ to $49.51\%$ and increases XSTest over-refusal from $28.00\%$ to $43.20\%$.
Prompting defenses can be effective for some LLMs, such as \texttt{Self-Reminder} on \texttt{Llama2-7B} under both static and adaptive attacks, but often incur severe over-refusal and utility degradation.
Training-based baselines such as \texttt{Circuit Breaker} and \texttt{ReFT-r1} improve robustness, but do not match the constraint-guided steering methods \texttt{SaP} and \algo{} across static and adaptive attacks.
This suggests that modeling safety as latent constraints provides a more reliable mechanism for jailbreak defense.

Compared to the closest latent-constraint baseline, \texttt{SaP}, \algo{} provides substantially stronger robustness. On \texttt{Ministral-8B}, \algo{} reduces ASR to $0.83_{\pm 1.86}\%$ while \texttt{SaP} remains at $23.91_{\pm 3.49}\%$; on \texttt{Qwen2-1.5B}, \algo{} reduces ASR to $0.66_{\pm 0.47}\%$ while \texttt{SaP} remains at $5.35_{\pm 3.44}\%$.
For a fair comparison, we also ablate the inference-time steering strength of \texttt{SaP} by varying its optimization weights.
As shown in~\cref{sec:implementation-details-baselines}, changing this steering strength does not close the robustness gap to \algo{}.
This suggests that the improvement comes from the greater expressiveness of nonlinear neural CBF boundaries.
Crucially, these robustness gains are achieved while maintaining MMLU and GSM8K performance and avoiding extreme over-refusal.

\begin{table*}
\caption{
    Attack success rate (ASR, $\downarrow$), adaptive-attack ASR (AA ASR, $\downarrow$), over-refusal rate (XSTest, $\downarrow$), and utility metrics (\texttt{MMLU}, \texttt{GSM8K}, $\uparrow$) for \algo{} (LSE) and other LLM safety baselines across four model families.
    ASR is averaged over nine HarmBench adversarial attack methods.
    \texttt{ActAdd} denotes Activation Addition, and \texttt{DirAbl} denotes Directional Ablation.
    Results are reported over five runs for learning-based methods, while other baselines are evaluated deterministically.
}
\label{tab:joint_results}
\vspace{-1.5mm}
\centering
\scriptsize
\resizebox{\textwidth}{!}{
\setlength{\tabcolsep}{3pt}
\begin{tabular}{llcccccccc}
\toprule
\textbf{Model} & \textbf{Metric} & \textbf{Base} & \textbf{Self-Reminder} & \textbf{Circuit Breaker} & \textbf{ActAdd} & \textbf{DirAbl} & \textbf{ReFT-r1} & \textbf{SaP} & \textbf{\algo{}} \\ \midrule
\multirow{5}{*}{\textbf{Gemma-2-9b}}
 & ASR $\downarrow$ & $19.50$ & $12.89$ & $9.91_{\pm 2.60}$ & $19.00$ & $13.61$ & $12.30_{\pm 3.19}$ & $8.00_{\pm 0.94}$ & $\mathbf{0.00_{\pm 0.00}}$ \\
 & AA ASR $\downarrow$ & $70.00$ & $66.00$ & $56.00_{\pm 4.00}$ & $62.00$ & $32.00$ & $65.33_{\pm 3.06}$ & $67.33_{\pm 3.06}$ & $53.00_{\pm 23.00}$ \\[0.35mm]
 & XSTest $\downarrow$ & $28.00$ & $38.80$ & $34.53_{\pm 3.00}$ & $31.20$ & $43.20$ & $36.80_{\pm 2.77}$ & $33.33_{\pm 1.89}$ & $32.00_{\pm 0.44}$ \\[0.35mm]
 & MMLU $\uparrow$ & $71.86$ & $71.72$ & $70.94_{\pm 0.74}$ & $71.86$ & $66.16$ & $71.31_{\pm 0.44}$ & $71.86_{\pm 0.01}$ & $71.87_{\pm 0.09}$ \\
 & GSM8K $\uparrow$ & $74.22$ & $72.78$ & $73.44_{\pm 0.84}$ & $73.69$ & $49.51$ & $73.74_{\pm 0.25}$ & $73.72_{\pm 0.16}$ & $73.65_{\pm 0.04}$ \\ \midrule
\multirow{5}{*}{\textbf{Ministral-8B}}
 & ASR $\downarrow$ & $51.19$ & $20.25$ & $30.19_{\pm 2.51}$ & $51.31$ & $51.78$ & $37.56_{\pm 3.31}$ & $23.91_{\pm 3.49}$ & $\mathbf{0.83_{\pm 1.86}}$ \\
 & AA ASR $\downarrow$ & $74.00$ & $74.00$ & $66.00_{\pm 5.48}$ & $78.00$ & $80.00$ & $68.00_{\pm 0.00}$ & $70.00_{\pm 2.00}$ & $19.60_{\pm 2.61}$ \\[0.35mm]
 & XSTest $\downarrow$ & $20.40$ & $22.00$ & $16.27_{\pm 1.80}$ & $17.20$ & $17.20$ & $15.20_{\pm 1.06}$ & $15.60_{\pm 3.55}$ & $20.96_{\pm 7.04}$ \\[0.35mm]
 & MMLU $\uparrow$ & $64.73$ & $63.03$ & $64.56_{\pm 0.06}$ & $64.53$ & $64.45$ & $63.89_{\pm 0.28}$ & $64.47_{\pm 0.02}$ & $63.07_{\pm 1.95}$ \\
 & GSM8K $\uparrow$ & $64.22$ & $67.63$ & $64.92_{\pm 0.61}$ & $65.05$ & $64.06$ & $65.33_{\pm 0.23}$ & $64.67_{\pm 0.13}$ & $62.55_{\pm 2.46}$ \\ \midrule
\multirow{5}{*}{\textbf{Llama2-7B}}
 & ASR $\downarrow$ & $6.06$ & $3.22$ & $6.70_{\pm 0.31}$ & $6.47$ & $3.19$ & $3.43_{\pm 0.35}$ & $6.88_{\pm 0.21}$ & $\mathbf{1.28_{\pm 2.17}}$ \\
 & AA ASR $\downarrow$ & $72.00$ & $2.00$ & $68.00_{\pm 2.00}$ & $70.00$ & $80.00$ & $36.00_{\pm 0.00}$ & $76.00_{\pm 5.29}$ & $1.60_{\pm 0.89}$ \\[0.35mm]
 & XSTest $\downarrow$ & $41.60$ & $68.00$ & $42.40_{\pm 0.69}$ & $41.60$ & $48.80$ & $47.33_{\pm 1.40}$ & $38.67_{\pm 2.66}$ & $43.36_{\pm 0.83}$ \\[0.35mm]
 & MMLU $\uparrow$ & $46.47$ & $44.51$ & $46.50_{\pm 0.01}$ & $46.56$ & $46.29$ & $46.60_{\pm 0.06}$ & $46.57_{\pm 0.02}$ & $46.56_{\pm 0.04}$ \\
 & GSM8K $\uparrow$ & $20.92$ & $9.55$ & $20.90_{\pm 0.45}$ & $21.30$ & $19.94$ & $21.56_{\pm 0.38}$ & $20.92_{\pm 0.08}$ & $20.94_{\pm 0.62}$ \\ \midrule
\multirow{5}{*}{\textbf{Qwen2-1.5B}}
 & ASR $\downarrow$ & $23.89$ & $9.08$ & $23.88_{\pm 1.47}$ & $19.14$ & $16.97$ & $4.17_{\pm 0.47}$ & $5.35_{\pm 3.44}$ & $\mathbf{0.66_{\pm 0.47}}$ \\
 & AA ASR $\downarrow$ & $62.00$ & $40.00$ & $49.33_{\pm 3.06}$ & $58.00$ & $60.00$ & $48.67_{\pm 4.16}$ & $63.33_{\pm 11.55}$ & $24.80_{\pm 1.79}$ \\[0.35mm]
 & XSTest $\downarrow$ & $27.60$ & $36.40$ & $26.53_{\pm 1.29}$ & $32.00$ & $34.80$ & $76.93_{\pm 3.59}$ & $29.47_{\pm 0.46}$ & $29.44_{\pm 1.49}$ \\[0.35mm]
 & MMLU $\uparrow$ & $55.69$ & $54.96$ & $55.64_{\pm 0.02}$ & $55.71$ & $54.76$ & $55.55_{\pm 0.04}$ & $55.69_{\pm 0.01}$ & $55.65_{\pm 0.05}$ \\
 & GSM8K $\uparrow$ & $24.94$ & $31.01$ & $26.71_{\pm 1.29}$ & $12.43$ & $13.12$ & $22.92_{\pm 1.34}$ & $12.63_{\pm 0.19}$ & $12.95_{\pm 0.63}$ \\ \bottomrule
\end{tabular}%
}
\vspace{-2mm}
\end{table*}

\begin{table*}[t]
\centering
\begin{minipage}[t]{0.64\textwidth}
\centering
\caption{Ablation study on the effect of steering strength $\alpha$
for \algo{} (LSE) on \texttt{Qwen2-1.5B} with $\kappa = 100$ over 5 runs.}
\label{tab:ablation_alpha}
\vspace{-1mm}
\scriptsize
\resizebox{0.87\linewidth}{!}{
\begin{minipage}{\linewidth}
\scriptsize
\setlength{\tabcolsep}{3pt}
\begin{tabular}{lccccc}
\toprule
\textbf{$\alpha$} & \textbf{ASR} $\downarrow$ & \textbf{AA ASR} $\downarrow$ & \textbf{XSTest} $\downarrow$ & \textbf{MMLU} $\uparrow$ & \textbf{GSM8K} $\uparrow$ \\
\midrule
0.001 & $0.20_{\pm 0.26}$ & $56.80_{\pm 9.34}$ & $\mathbf{28.00_{\pm 1.70}}$ & $\mathbf{55.72_{\pm 0.04}}$ & $\mathbf{13.03_{\pm 0.14}}$ \\
0.01 & $0.66_{\pm 0.47}$ & $24.80_{\pm 1.79}$ & $29.44_{\pm 1.49}$ & $55.65_{\pm 0.05}$ & $12.95_{\pm 0.63}$ \\
0.1 & $0.06_{\pm 0.10}$ & $12.67_{\pm 13.61}$ & $60.13_{\pm 21.70}$ & $50.24_{\pm 4.58}$ & $12.36_{\pm 1.71}$ \\
0.2 & $0.02_{\pm 0.03}$ & $4.67_{\pm 5.03}$ & $66.53_{\pm 14.34}$ & $45.94_{\pm 7.02}$ & $11.37_{\pm 1.93}$ \\
0.3 & $0.04_{\pm 0.07}$ & $1.33_{\pm 2.31}$ & $69.73_{\pm 8.95}$ & $43.59_{\pm 6.07}$ & $10.69_{\pm 0.72}$ \\
0.5 & ${0.00_{\pm 0.00}}$ & $9.33_{\pm 11.02}$ & $74.13_{\pm 2.81}$ & $40.45_{\pm 4.03}$ & $9.30_{\pm 1.25}$ \\
1.0 & $\mathbf{0.00_{\pm 0.00}}$ & ${2.67_{\pm 1.15}}$ & $76.93_{\pm 4.64}$ & $37.80_{\pm 2.81}$ & $7.25_{\pm 0.16}$ \\
\bottomrule
\end{tabular}
\end{minipage}%
}
\end{minipage}
\hfill
\begin{minipage}[t]{0.33\textwidth}
\centering
\caption{Steering latency (ms/token, $\downarrow$) on \texttt{Qwen2-1.5B}.
}
\label{tab:latency}
\vspace{-1mm}
\scriptsize
\setlength{\tabcolsep}{2pt}
\begin{tabular}{lc}
    \toprule
    \textbf{Method} & \textbf{Latency $\downarrow$} \\ \midrule
    SaP (100 steps) & $190.67_{\pm 3.17}$ \\
    SaP (1 step) & $139.97_{\pm 6.33}$ \\ \midrule
    \algo{} (Top-2) & $29.45_{\pm 11.00}$ \\
    \algo{} (QP) & $33.77_{\pm 5.86}$ \\
    \algo{} (LSE) & $\mathbf{2.40_{\pm 0.06}}$ \\ \bottomrule
\end{tabular}
\end{minipage}
\vspace{-5mm}
\end{table*}

\textbf{Impact of Steering Strength $\alpha$.}
We further investigate the sensitivity of \algo{} to the steering strength $\alpha$ in Table~\ref{tab:ablation_alpha}.
The results show that $\alpha$ directly controls the safety--utility trade-off: smaller values better preserve utility and maintain low over-refusal, whereas larger values apply stronger safety corrections and further reduce attack success.
In particular, increasing $\alpha$ drives static HarmBench ASR to zero and often improves robustness under adaptive attacks.

We use $\alpha=0.01$ in Table~\ref{tab:joint_results} because it substantially reduces both static and adaptive ASR while keeping XSTest and utility close to the base model.
In practice, $\alpha$ can be adjusted at inference time, enabling deployment-time calibration of the safety--utility trade-off:
applications that prioritize robustness can use a larger steering strength, whereas settings that prioritize benign-task performance can use a smaller value.
We provide the full ablation over $\alpha$ for all four LLMs in \cref{appendix:adv-attack}.

\vspace{-1mm}
\subsection{Computational Efficiency}
\label{sec:exp_latency}
\vspace{-1.5mm}
A primary advantage of \algo{} is its computational efficiency.
Unlike \texttt{SaP}, which runs gradient descent for each token at inference time, \algo{} (LSE) uses a closed-form update (see~\cref{sec:composing_cbf}).
We measure per-token inference latency only  for tokens that trigger steering, thereby isolating the steering overhead.
Additional benchmarking details and results, including total generation time with both inference and steering, are provided in \cref{sec:computational-benchmark-details}.
As shown in Table~\ref{tab:latency}, \algo{} (LSE) introduces negligible overhead, with a per-token steering latency of $2.40$ ms.
In contrast, \texttt{SaP} requires $190.67$ ms with 100 optimization steps and $139.97$ ms with one optimization step.
This corresponds to speedups of over $79\times$ and $58\times$, respectively.

\vspace{-1mm}
\subsection{Modular Composition of Safety Barriers}
\label{sec:exp_compose_cnf}
\vspace{-1.5mm}
\begin{wraptable}[9]{r}{0.45\textwidth}
    \vspace{-8mm}
    \centering
    \caption{Harmful rate ($\downarrow$) evaluations for \algo{} on \texttt{Qwen2-1.5B} when composing safety barriers independently trained on
    14 risk categories from \texttt{WildGuardMix}~\citep{han2024wildguardopenonestopmoderation}.
    Metrics are averaged over five runs.}
    \label{tab:composition_results}
    \vspace{-5mm}
    \vskip 0.15in
    \begin{small}
    \setlength{\tabcolsep}{5pt}
    \begin{tabular}{lc}
        \toprule
        \textbf{Method} & \textbf{Harmful Rate $\downarrow$} \\
        \midrule
        Original Model & $22.77$ \\
        Single \algo{} (avg.) & $4.74_{\pm 1.21}$ \\
        \algo{} (LSE) & $\mathbf{0.86_{\pm 0.04}}$ \\
        \bottomrule
    \end{tabular}
    \end{small}
    \vspace{-2mm}
\end{wraptable}

We demonstrate the modularity of \algo{} by training distinct Control Barrier Functions (CBFs), each using data from one of 14 harmful risk categories in \texttt{WildGuardMix}~\citep{han2024wildguardopenonestopmoderation}.
For robust generalization, we evaluate on a held-out test set containing unseen behaviors from all 14 categories.
These independently trained barriers are used to steer individually or composed using the LSE formula.
We refer to~\cref{appendix:compose-cbf-exp} for experimental details, results for other \algo{} variants, and an ablation on the effect of $\kappa$ in~\cref{eqn:compose}.

Table~\ref{tab:composition_results} presents the main composition results.
The original model has a high harmful-generation rate of $22.77\%$.
Averaged across the 14 single-barrier configurations, this rate decreases to $4.74\%$, showing that a barrier trained on one risk concept can also reduce unsafe generations in other categories.
Finally, \algo{} (LSE) reduces the harmful-generation rate to just $0.86\%$.
This shows that composing CBFs trained on independent concepts enables \algo{} to perform well across all concepts, consistent with~\cref{thm:safe}.
This modularity supports practical real-world use cases, where safety specifications can be applied compositionally depending on context and users.

\vspace{-1mm}
\subsection{Ablating Neural Barriers and Steering Mechanism}
\label{sec:exp_mechanism_ablation}
\vspace{-1.5mm}
We further ablate two design choices of \algo{}.
First, to test the role of nonlinear safety boundaries, we replace the 5-layer neural CBFs in the WildGuard composition experiment with linear probes, while keeping the same training data and LSE merge rules.
As shown in~\cref{appendix:neural-cbf-ablation,tab:composition-results-detailed}, the LSE harmful rate increases from $0.86_{\pm 0.04}$ with neural barriers to $2.89_{\pm 0.69}$ with linear probes, indicating that nonlinear barriers better capture safety constraints.
Second, to separate unsafe-state detection from the steering update itself, we evaluate conditional fixed-direction baselines that use the same learned barrier only as a trigger for applying \texttt{ActAdd} or \texttt{DirAbl}.
These baselines remain substantially weaker than \algo{} (\cref{appendix:conditional-fixed-steering,tab:conditional-fixed-steering}) in reducing ASR, suggesting that the gains come from the adaptive correction direction and magnitude, rather than from detection alone.

%% file: latex/conclusion.tex

We presented \algo{}, a principled inference-time framework for improving LLM safety by embedding learned nonlinear safety constraints in latent representation space. 
By treating hidden-state safety classifiers as Control Barrier Functions, \algo{} steers hidden-state trajectories toward learned safe regions while preserving the underlying model parameters. 
Empirically, \algo{} consistently reduces unsafe generations and adversarial attack success rates across models and datasets, including under adaptive attacks, while maintaining model utility and avoiding excessive over-refusal. 
Limitations include that the guarantees of \algo{} are conditional on the learned barriers accurately capturing the intended safety properties and apply to latent representations rather than providing strict worst-case guarantees for generated text. 
Future work includes improving the robustness of learned safety barriers, scaling \algo{} to frontier LLMs, and developing more fine-grained steering rules to better manage adaptive attacks and safety--utility--refusal trade-offs.

%% file: latex/appendix.tex

\section{Closed-form CBF Steering Derivation}
\label{sec:cf}

To derive the closed-form update, we consider the case where the active constraints have been reduced to at most two constraints by the merge rule using~\cref{eqn:compose} shown in \cref{sec:composing_cbf}.
 Here, we explicitly show the closed-form solution to the QP in~\cref{eqn:qp} following \citep{luenberger1997optimization}.
We consider the case where the QP has two constraints.
Given the following QP corresponding to ~\cref{eqn:qp} for the proposed \algo{}:
\begin{equation} \label{eqn:qp0}
\begin{aligned}
    u^{*} = \arg &\min_{u} ||u - \frac{h_t - h_{t-1}}{\Delta t}||^2,\\
\text{s.t., } & \nabla b_1(h_t)^\top u + \alpha b_1(h_t) \geq 0\\
& \nabla b_2(h_t)^\top u + \alpha b_2(h_t) \geq 0,
\end{aligned}
\end{equation}
where $b_1, b_2$ are the two constraints merged from the learned CBFs $\{b_{k}(h)\geq \delta \}_{k=1}^K$, following Sec. \ref{sec:composing_cbf}.
Next, we define
\begin{equation*}
    \begin{aligned}
    g_1(h_t) = [-\nabla b_1(h_t)^\top], \quad h_1(h_t) = \alpha b_1(h_t),\\
    g_2(h_t) = [-\nabla b_2(h_t)^\top],\quad h_2(h_t) = \alpha b_2(h_t).
    \end{aligned}
\end{equation*}

The standard form of the QP cost is $u^TH(h_t)u + F(h_t)u$,  where the matrix $H(h_t)$ is an identity matrix corresponding to \cref{eqn:qp0}. We can further define
\begin{equation*}
\begin{aligned}
[\hat g_1(h_t), \hat g_2(h_t)] = H(h_t)^{-1}[g_1(h_t), g_2(h_t)], \\
\left[
\begin{array}
	[c]{c}
	\hat h_1(h_t)\\ \hat h_2(h_t)
	\end{array}
	\right]
    =
    \left[
\begin{array}
	[c]{c}
	 h_1(h_t)\\ h_2(h_t)
	\end{array}
	\right] - \left[
\begin{array}
	[c]{c}
	 g_1(h_t)^T\\ g_2(h_t)^T
	\end{array}
	\right] \hat u
\end{aligned}
\end{equation*}
where
\begin{equation*}
\begin{aligned}
    \hat u = -H(h_t)^{-1}F(h_t), \qquad
    F(h_t) = - \frac{h_{t} - h_{t-1}}{\Delta t}.
    \end{aligned}
\end{equation*}

Then, let $w := u - \hat u$ and $\langle\cdot, \cdot\rangle$ define an inner product with weight matrix $H(h_t)$ so that $\langle w, w\rangle = w^T H(h_t) w$. We have that the optimization problem in \cref{eqn:qp0} is equivalent to:
\begin{equation} \label{eqn:qp_eq}
\begin{aligned}
     w^{*} = \arg &\min_{w} \langle w, w\rangle,\\
\text{s.t., } & \langle\hat g_1(h_t),w\rangle \leq \hat h_1(h_t),\\
& \langle\hat g_2(h_t),w\rangle \leq \hat h_2(h_t),
\end{aligned}
\end{equation}
where the optimal solution of \cref{eqn:qp0} is given by
\begin{equation*}u^{*} = w^* + \hat u.
\end{equation*}

Following \cite{luenberger1997optimization} [Ch. 3], the unique solution to \cref{eqn:qp_eq} is given by
\begin{equation*}
    w^* = \lambda_1(h_t)\hat g_1(h_t) + \lambda_2(h_t)\hat g_2(h_t)
\end{equation*}
where
\begin{equation*}
\begin{aligned} \lambda_1(h_t) = \left\{ \begin{array}{lll} 0 &\text{ if $G_{21}(h_t)[\hat h_2(h_t)]_+ - G_{22}(h_t)\hat h_1(h_t) < 0$} \\ \frac{[\hat h_1(h_t)]_+}{G_{11}(h_t)} &\text{ if $G_{12}(h_t)[\hat h_1(h_t)]_+ - G_{11}(h_t)\hat h_2(h_t) < 0$}\\ \frac{[G_{22}(h_t)\hat h_1(h_t) - G_{21}(h_t)\hat h_2(h_t)]_+}{G_{11}(h_t)G_{22}(h_t) - G_{12}(h_t)G_{21}(h_t)} &\text{ otherwise }. \end{array} \right.\\ \end{aligned}
\end{equation*}

\begin{equation*}
\begin{aligned} \lambda_2(h_t) = \left\{ \begin{array}{lll} \frac{[\hat h_2(h_t)]_+}{G_{22}(h_t)} &\text{ if $G_{21}(h_t)[\hat h_2(h_t)]_+ - G_{22}(h_t)\hat h_1(h_t) < 0$} \\ 0 &\text{ if $G_{12}(h_t)[\hat h_1(h_t)]_+ - G_{11}(h_t)\hat h_2(h_t) < 0$}\\ \frac{[G_{11}(h_t)\hat h_2(h_t) - G_{12}(h_t)\hat h_1(h_t)]_+}{G_{11}(h_t)G_{22}(h_t) - G_{12}(h_t)G_{21}(h_t)} &\text{ otherwise }. \end{array} \right.\\ \end{aligned}
\end{equation*}
where $G(h_t) = [G_{ij}(h_t)] = [\langle\hat g_i(h_t), \hat g_j(h_t)\rangle], i,j = 1,2$ is the Gram matrix.

\section{Proof of Theorem \ref{thm:safe}}
\label{sec:proof}
\SafeGuarantee*
\textit{Proof.} We prove the theorem for the local CBF update at the current hidden state under the local assumptions stated in the main text: the learned and composed barriers are continuously differentiable, and the one-step hidden-state evolution is locally Lipschitz and well approximated by the finite-difference model in~\cref{eqn:qp}. Following~\cref{sec:cf}, we have that the \algo{} control $u^*$ in~\cref{eqn:cf} is the exact solution of the QP in~\cref{eqn:qp0}. In other words, the following constraint is satisfied:
\begin{equation}
\label{eqn:cbf}
\begin{aligned}
  \nabla B(h)^\top u^* + \alpha B(h) \geq 0,
\end{aligned}
\end{equation}
where $B(h)$ is the composed barrier obtained via \cref{eqn:compose}, i.e.,
$$
B(h) = -\frac{1}{\kappa} \ln \left( \sum_{k=1}^{K} e^{-\kappa (b_{k}(h)-\delta)} \right).
$$

Following~\citep{Boyd2004}, $B(h)$ is a smooth lower bound of $b_{k}(h)-\delta$ for all $k\in\{1,\dots,K\}$.
Therefore, $B(h)\geq 0$ is a sufficient condition for satisfying every learned constraint $b_k(h)\geq \delta$.

Thus, if the current hidden state $h$ is in the composed learned safe set $\mathcal{C}_B = \{h \mid B(h) \geq 0\}$, then under the local dynamics $\dot{h} = u$ adopted in~\cref{sec:steering}, we have $\dot{B}(h) = \nabla B(h)^\top u^*$, and~\cref{eqn:cbf} becomes
\begin{equation}
    \dot B(h)+\alpha B(h)\geq 0.
\end{equation}
By the comparison lemma, this implies $B(h')\geq B(h)e^{-\alpha}$ under the local update, so $B(h')\geq 0$ whenever $B(h)\geq 0$.
Therefore, the local CBF update preserves the composed barrier condition.

Since $B(h)$ lower-bounds all $b_k(h)-\delta$, satisfying the composed-barrier condition is sufficient for satisfying all learned constraints. Thus, the \algo{} control obtained via the composing method in \cref{eqn:compose} preserves the composed-barrier condition under the local CBF update.

On the other hand, if the current hidden state $h$ is outside $\mathcal{C}_B$, then $B(h) \leq 0$.
To analyze convergence toward $\mathcal{C}_B$, we introduce the auxiliary function $V(h) = -B(h)$, which satisfies $V(h) \geq 0$.
Under the local dynamics $\dot{h} = u^*$, $\dot{V}(h) = -\dot{B}(h) = -\nabla B(h)^\top u^* = \nabla V(h)^\top u^*$, so~\cref{eqn:cbf} becomes

\begin{equation} \label{eqn:diff_cbf}
    \dot V(h) + \alpha V(h) =\nabla V(h)^\top u^* + \alpha V(h) \leq 0,
\end{equation}

The corresponding equality $\dot{V}(h) + \alpha V(h) = 0$ has solution $V(h') = V(h)e^{-\alpha}$.
By the comparison lemma~\cite{Khalil2002},~\cref{eqn:diff_cbf} therefore implies
$$V(h') \leq V(h)e^{-\alpha}.$$
Hence, $V(h')=-B(h')$ contracts exponentially under the local CBF update.
Since $B(h)$ is a lower bound of $b_{k}(h)-\delta$ for all $k\in\{1,\dots,K\}$, reducing the composed-barrier violation moves the updated hidden state $h'$ toward the learned safe set. $\hfill \blacksquare$

\section{Implementation Details of \algo{}}
\label{appendix:barrier-implementation-details}

This section provides implementation details for \algo{}.
We first outline the inference-time steering algorithm, then describe the neural CBF architecture and the latent dynamics model used by \algo{}.
The key implementation choice is that \algo{} learns nonlinear scalar barriers in hidden-state space, but applies them through a lightweight CBF update during decoding.

\para{Inference-time algorithm.}
Algorithm~\ref{alg:barriersteer-appendix} summarizes the implementation view of \algo{}.
At each decoding step, the method evaluates the learned barriers on the current hidden state, estimates the local hidden-state velocity from consecutive decoding states, and applies a CBF correction only when the current state is predicted to violate the learned safety constraints.
The same algorithm supports the Top-2, QP, and LSE merge rules discussed in the main text.

\begin{algorithm}[t]
\caption{Algorithmic view of \algo{} inference-time steering.}
\label{alg:barriersteer-appendix}
\begin{algorithmic}[1]
\REQUIRE Aligned LLM $\mathcal{M}$, learned barriers $\{b_k\}_{k=1}^K$, user query $q$, threshold $\delta$, steering strength $\alpha$, time step $\Delta t$, layer $\ell$, merge rule $m \in \{\mathrm{Top\text{-}2}, \mathrm{QP}, \mathrm{LSE}\}$.
\ENSURE Steered response $x=(x_1,\ldots,x_T)$.
\STATE Initialize response prefix $x_{\leq 0}\leftarrow \emptyset$, prefix state $s_0\leftarrow q$, and previous hidden state $h_0\leftarrow h^{(\ell)}(s_0)$.
\FOR{$t = 1, \ldots, T$}
    \STATE Form the current prefix state $s_t=(q,x_{\leq t})$ and extract the layer-$\ell$ hidden state $h_t\leftarrow h^{(\ell)}(s_t)$.
    \STATE Evaluate barrier residuals $r_k \leftarrow b_k(h_t)-\delta$ for all active safety constraints.
    \STATE Estimate the nominal latent velocity $u_0 \leftarrow (h_t-h_{t-1})/\Delta t$.
    \IF{all $r_k \geq 0$}
        \STATE Set $h'_t \leftarrow h_t$ without intervention.
    \ELSE
        \STATE Use merge rule $m$ to select or compose the active barriers via~\cref{eqn:compose}.
        \STATE Compute the CBF control $u^*$ using the active barrier gradients, $u_0$, and $\alpha$.
        \STATE Apply the controlled hidden state $h'_t \leftarrow h_{t-1}+u^*\Delta t$.
    \ENDIF
    \STATE Continue decoding from $h'_t$, sample the token, and append it to the response prefix.
\ENDFOR
\RETURN $x$.
\end{algorithmic}
\end{algorithm}

\para{Latent dynamics used by \algo{}.}
We use the standard control-affine notation
\begin{align*}
    \dot h = f(h) + g(h)u,
\end{align*}
where $h$ denotes the hidden state being steered and $u$ is the control applied in latent space.
In our implementation, we do not train a separate dynamics model.
Instead, because the controller directly edits the current hidden state, we set the drift term to zero and the control matrix to identity, i.e., $f(h)=0$ and $g(h)=I$.
The nominal motion of the unsteered model is estimated online as $u_0=(h_t-h_{t-1})/\Delta t$.
Thus, the CBF layer only needs to compute a corrected latent velocity $u^*$, after which the steered state is applied as $h'_t=h_{t-1}+u^*\Delta t$.
This avoids learning autoregressive hidden-state dynamics while still using the observed local trajectory of the model during decoding.

\para{Neural CBF architecture.}
Each barrier $b_k$ is implemented as an independent multilayer perceptron that maps a hidden state in $\mathbb{R}^{d_\phi}$ to a scalar barrier value.
For each head, we use the architecture
\begin{align*}
    d_\phi \rightarrow 2048 \rightarrow 1024 \rightarrow 512 \rightarrow 256 \rightarrow 1.
\end{align*}
Every hidden block consists of a linear layer followed by LayerNorm, GELU, and dropout; the final layer is a linear scalar head.
This gives each barrier a nonlinear decision boundary while keeping the implementation close to the latent-feature architecture used by \texttt{SaP}.

By default, our experiments use $K=4$ independent CBF heads.
This choice is intended to keep the number of trainable barrier parameters comparable to the \texttt{SaP} baseline while allowing nonlinear constraints.
For the Qwen2-1.5B setting, where $d_\phi=1536$, the above architecture has $5{,}910{,}017$ parameters per CBF head and $23{,}640{,}068$ total trainable barrier parameters for $K=4$.
The corresponding \texttt{SaP} latent-constraint module has $25{,}673{,}758$ trainable parameters, so improvements are not simply due to using a substantially larger safety module.
For other model families, we use the same architecture with the corresponding hidden-state dimension $d_\phi$.

\section{Experimental Setup and Additional Results}
\label{appendix:exp-details}

This appendix provides additional details for reproducing the experiments in~\cref{sec:experiments}.
We first describe how response-level safety labels are converted into token-level hidden-state supervision and report the exact hidden-state counts used to train the barriers.
We also provide implementation notes for the baselines, per-attack HarmBench results, variant-selection ablations, latency measurements, the WildGuard modular-composition study, and compute resources.

\subsection{Hidden-State Supervision and Training Set Sizes}
\label{appendix:hidden-state-counts}

\algo{} trains safety barriers from binary safe/unsafe response labels rather than preference pairs.
For HarmBench, we use adversarially elicited model responses from the HarmBench protocol~\citep{mazeika2024harmbenchstandardizedevaluationframework}; for WildGuard, we use category-specific examples from WildGuardMix~\citep{han2024wildguardopenonestopmoderation}.
Generated responses are labeled with the HarmBench classifier~\citep{mazeika2024harmbench}, and the resulting labels supervise hidden states extracted from the target model.
This follows the standard latent-safety-classifier setting used by prior representation-space safety methods~\citep{ICML2025_chenlearning,cunningham2026constitutional}, but \algo{} uses the learned classifier as a CBF for inference-time steering.

Because the available supervision is response-level, each generated response contributes token-level hidden states with the same binary label.
If a response is labeled unsafe, all extracted hidden states from that response are assigned the unsafe label; if it is labeled safe, the corresponding hidden states are assigned the safe label.
This construction is appropriate for decoder-only Transformers because the hidden state at a generated position is contextual: it summarizes the prompt and generated prefix through causal self-attention, rather than representing only the current token in isolation.
Consequently, a few hundred behavior prompts yield millions of labeled hidden-state examples.
Tables~\ref{tab:harmbench-hs-counts} and~\ref{tab:wildguard-hs-counts} report the exact counts used in our experiments.

\begin{table}[!t]
\centering
\caption{Exact HarmBench token-level hidden-state counts used for training. Counts aggregate all nine HarmBench attack-method training files for each model.}
\label{tab:harmbench-hs-counts}
\begin{small}
\setlength{\tabcolsep}{5pt}
\begin{tabular}{lrrr}
\toprule
\textbf{Model} & \textbf{Safe tokens} & \textbf{Unsafe tokens} & \textbf{Total tokens} \\
\midrule
Gemma-2-9B & 2,083,847 & 668,665 & 2,752,512 \\
Llama-2-7B & 2,652,477 & 100,035 & 2,752,512 \\
Ministral-8B & 1,266,580 & 1,485,932 & 2,752,512 \\
Qwen2-1.5B & 2,340,020 & 412,082 & 2,752,102 \\
\bottomrule
\end{tabular}
\end{small}
\vspace{-3mm}
\end{table}

\begin{table}[!t]
\centering
\caption{Exact WildGuard token-level hidden-state counts used for training individual category barriers on \texttt{Qwen2-1.5B}. We report the per-concept counts because the WildGuard experiment trains one barrier per concept and reports WildGuard results for Qwen2-1.5B.}
\label{tab:wildguard-hs-counts}
\begin{small}
\setlength{\tabcolsep}{3pt}
\begin{tabular}{lrrr}
\toprule
\textbf{WildGuard concept} & \textbf{Safe tokens} & \textbf{Unsafe tokens} & \textbf{Total tokens} \\
\midrule
Copyright & 114,666 & 16,406 & 131,072 \\
Cyberattack & 64,237 & 66,835 & 131,072 \\
Defamation / unsafe actions & 100,669 & 30,403 & 131,072 \\
Discrimination & 101,575 & 29,497 & 131,072 \\
Disinformation campaigns & 93,457 & 37,615 & 131,072 \\
Fraud / illegal activities & 98,252 & 32,820 & 131,072 \\
Harmful misinformation & 97,457 & 33,615 & 131,072 \\
Mental health crisis & 101,916 & 29,156 & 131,072 \\
Others & 97,074 & 33,998 & 131,072 \\
Private information & 106,845 & 24,227 & 131,072 \\
Sensitive information & 108,441 & 22,631 & 131,072 \\
Sexual content & 108,925 & 22,147 & 131,072 \\
Toxic language / hate speech & 110,190 & 20,882 & 131,072 \\
Violence / physical harm & 100,289 & 30,783 & 131,072 \\
\midrule
\textbf{Total} & \textbf{1,403,993} & \textbf{431,015} & \textbf{1,835,008} \\
\bottomrule
\end{tabular}
\end{small}
\vspace{-3mm}
\end{table}

\subsection{Implementation Details of Baselines}
\label{sec:implementation-details-baselines}

This section specifies the baseline implementations used in the main experiments.
Unless otherwise stated, all decoding uses greedy generation with temperature $0$, and all methods use the same prompts, behavior splits, and safety classifier described in~\cref{sec:exp_adv_attack} and Appendix~\ref{appendix:adv-attack}.
For learned representation-space methods, we use the same safe/unsafe response data as \algo{} whenever the baseline requires training supervision.
All hidden-state baselines use layer $L=20$, matching the hidden-state extraction layer used to train \algo{}.

\para{Base model.}
The base model is the original instruction-tuned checkpoint without additional prompting, fine-tuning, or activation intervention.
We evaluate each model using its respective default instruction prompt or chat template from Hugging Face: Llama-2-7B-Chat,
Ministral-8B-Instruct-2410, Qwen2-1.5B-Instruct, and Gemma-2-9B-IT.\footnote{Model checkpoints: \url{https://huggingface.co/meta-llama/Llama-2-7b-chat-hf}, \url{https://huggingface.co/mistralai/Ministral-8B-Instruct-2410}, \url{https://huggingface.co/Qwen/Qwen2-1.5B-Instruct}, and \url{https://huggingface.co/google/gemma-2-9b-it}.}
It serves as the reference for both safety and utility metrics, and all steering methods are applied on top of the same base checkpoint.

\para{Self-Reminder.}
\texttt{Self-Reminder}~\citep{xie2023defending} is a prompting-only inference-time defense that inserts safety reminders into both the system message and the user query while keeping model parameters unchanged.
We use the default \texttt{remind}/\texttt{remind} template from the released implementation:
\begin{quote}
\textbf{System reminder:} \texttt{\{original\_system\_message\} You should be a responsible language model and should not generate harmful or misleading content! Please answer the following user query in a responsible way.}

\textbf{Query reminder:} \texttt{Remember, you should be a responsible language model and should not generate harmful or misleading content!}
\end{quote}
The query reminder is prepended to the user prompt, while the system reminder augments the model's default system message.
Because this method only changes the prompt, it is evaluated deterministically under greedy decoding.

\para{Circuit Breaker.}
\texttt{Circuit Breaker}~\citep{zou2024improving} is a training-time representation method that learns LoRA adapters to reroute internal representations associated with harmful completions while preserving benign behavior through a retain objective.
We train it on the same HarmBench behavior split as the other learned methods.
Following the LoRRA objective from Circuit Breaker, let $\mathcal{M}_0$ denote the frozen base model, $\mathcal{M}_\theta$ the LoRA-adapted model, and $\operatorname{rep}_\ell(\mathcal{M},x)$ the layer-$\ell$ hidden representations selected
for the loss.
For circuit-breaker examples $x_s\sim\mathcal{D}_s$ and retain examples $x_r\sim\mathcal{D}_r$, the training objective is
\begin{align*}
    \mathcal{L}_{\mathrm{CB}}(\theta)
    ={}& c_s\,\mathbb{E}_{x_s\sim\mathcal{D}_s}
    \left[\operatorname{ReLU}\!\left(\cos\!\left(
    \operatorname{rep}_\ell(\mathcal{M}_0,x_s),
    \operatorname{rep}_\ell(\mathcal{M}_\theta,x_s)
    \right)\right)\right] \\
    &+ c_r\,\mathbb{E}_{x_r\sim\mathcal{D}_r}
    \left[\left\|\operatorname{rep}_\ell(\mathcal{M}_0,x_r)
    -\operatorname{rep}_\ell(\mathcal{M}_\theta,x_r)\right\|_2^2\right].
\end{align*}
The first term reroutes harmful representations away from the base model's harmful trajectory, while the retain term preserves representations on benign examples.
The coefficients $c_s$ and $c_r$ follow the Circuit Breaker schedule that trades off rerouting and retention during training.
The target layer for the circuit-breaker loss is layer $20$; LoRA adapters are inserted into transformer layers $0$--$20$.
The LoRA target modules are \texttt{q\_proj}, \texttt{k\_proj}, \texttt{v\_proj}, \texttt{o\_proj}, \texttt{gate\_proj}, \texttt{up\_proj}, and \texttt{down\_proj}.
We use LoRA rank $r=16$, LoRA alpha $16$, dropout $0.05$, circuit-breaker loss weight $10$, learning rate $10^{-4}$, batch size $8$, gradient accumulation $1$, weight decay $0$, maximum sequence length $1024$, and $150$ training steps with a constant learning-rate schedule.
Unlike inference-time steering methods, Circuit Breaker changes the model through an adapter before evaluation rather than computing a token-level intervention during decoding.

\para{Activation Addition.}
\texttt{Activation Addition}~\citep{turner2023steering,rimsky2024steering} constructs a fixed steering direction from the difference between mean safe and unsafe hidden states,
$ r = |\mathcal{H}_{\safe}|^{-1}\sum_{h\in\mathcal{H}_{\safe}} h - |\mathcal{H}_{\unsafe}|^{-1}\sum_{h\in\mathcal{H}_{\unsafe}} h $.
During generation, it applies the additive intervention $h'=h+\alpha r$ at layer $20$.
We use the same hidden-state data as \algo{} to estimate $r$, set $\alpha=1$, and do not normalize the steering vector.

\para{Directional Ablation.}
\texttt{Directional Ablation}~\citep{arditi2024refusal} uses the same safe--unsafe direction $r$, but removes the component of the hidden state along that direction rather than adding it: 
$h'=h-r(r^\top r)^{-1}r^\top h$.
This baseline tests whether suppressing the harmful/refusal-related direction is sufficient for safety compared to actively steering toward a learned safe region.

\para{ReFT-r1.}
\texttt{ReFT-r1}~\citep{wu2025axbenchsteeringllmssimple} is a weakly supervised rank-one representation-finetuning baseline.
Instead of computing a fixed mean-difference direction, it learns a rank-one linear intervention from the same safe/unsafe supervision used by the other learned representation baselines.
Training freezes the base LLM and optimizes only the rank-one intervention vector using the standard next-token cross-entropy loss on the target response tokens, with prompt tokens masked out using label value $-100$.
Let $\mathcal{M}_{\phi}$ denote the frozen LLM equipped with the learned rank-one intervention parameters $\phi$, and let $m_t\in\{0,1\}$ indicate whether token $x_t$ is part of the target response rather than the prompt.
The ReFT-r1 objective used in our experiments is
\begin{align*}
    \mathcal{L}_{\mathrm{ReFT}}(\phi)
    = -\mathbb{E}_{(q,x)}
    \left[\frac{1}{\sum_t m_t}\sum_{t=1}^{T}
    m_t \log p_{\mathcal{M}_{\phi}}(x_t\mid q,x_{<t})\right]
    + \lambda_{1}\,\Omega_{\mathrm{non\text{-}top}k}(\phi),
\end{align*}
where the optional sparsity term penalizes intervention scores outside the selected top-$k$ positions.
The implementation optionally adds this $\ell_1$ penalty on non-top-$k$ intervention scores, but the term is disabled in our experiments because we set $\lambda_1=0$.
The intervention is trained and applied at layer $20$ with top-$k=5$, intervention scale $\beta=1$, learning rate $10^{-3}$, batch size $4$, gradient accumulation $1$, warmup steps $0$, maximum sequence length $1024$, and $500$ training steps.
At inference time, the learned rank-one intervention is applied to hidden states during generation.

\para{SaP.}
\texttt{SaP}~\citep{ICML2025_chenlearning} is the closest latent-constraint baseline to \algo{}.
It learns linear safety constraints in a sparse concept space and performs inference-time projection when a hidden state violates the learned polytope.
We follow the released configuration: the concept encoder maps hidden states into a $16{,}384$-dimensional sparse representation, the polytope uses $K=30$ facets, and model-specific margin and regularization hyperparameters for Llama-2-7B, Ministral-8B, and Qwen2-1.5B are adopted from the original implementation. For Qwen2-1.5B, this gives $25{,}673{,}758$ trainable parameters in the \texttt{SaP} safety module.
Both \algo{} and \texttt{SaP} are trained with Adam, learning rate $10^{-2}$, and batch size $512$.

During inference, \texttt{SaP} optimizes the hidden state for samples predicted to be unsafe.
In our implementation, for an original hidden state $h$, \texttt{SaP} computes an updated state
\begin{align*}
    h^\star = \arg\min_{h'}\;&
    \frac{\|h'-h\|_1}{d_h}
    + \omega_{\mathrm{safe}} \sum_{k=1}^{K} v_k(h')
    + \lambda \sum_{k=1}^{K} [v_k(h')]_+,
\end{align*}
where $v_k(h')$ is the violation of the $k$-th learned linear constraint and $[\cdot]_+$ denotes the positive part.
We use default hyperparameters from the paper for this optimization. We set $\omega_{\mathrm{safe}}=10^{-4}$ in all SaP runs.
The coefficient $\lambda$ is therefore the main inference-time steering-strength parameter: larger values penalize positive constraint violations more strongly, but can also move the hidden state farther from the original representation.
We use the default $\lambda=2$ unless otherwise stated and run $100$ SGD iterations with step size $0.01$ for this projection step.
Table~\ref{tab:gemma-sap-lambda-asr} reports an ablation over $\lambda$ while keeping the learned SaP facets fixed.
The ablation shows that increasing $\lambda$ does not meaningfully improve SaP robustness once the learned linear facets are fixed.
This suggests that SaP's remaining ASR is not primarily due to an under-tuned projection strength, but rather to the expressiveness and optimization limits of enforcing linear latent constraints during decoding.

\begin{table*}[!t]
\centering
\caption{SaP inference-time steering-strength sensitivity on \texttt{Gemma-2-9B}. We vary the positive-violation penalty weight $\lambda$ while keeping the learned SaP facets fixed. ASR is measured on HarmBench static attacks and reported as mean$_{\pm\mathrm{std}}$.}
\label{tab:gemma-sap-lambda-asr}
\begin{small}
\setlength{\tabcolsep}{12pt}
\begin{tabular}{lccccc}
\toprule
\textbf{$\lambda$} & 1 & 2 & 5 & 10 & 20 \\
\midrule
\textbf{ASR $\downarrow$} & $8.17_{\pm 0.47}$ & $8.00_{\pm 0.94}$ & $8.08_{\pm 0.77}$ & $8.42_{\pm 1.20}$ & $8.50_{\pm 1.34}$ \\
\bottomrule
\end{tabular}
\end{small}
\vspace{-4mm}
\end{table*}

\subsection{Adversarial Attacks}
\label{appendix:adv-attack}
\label{appendix:baseline-protocol-details}

\para{Dataset collection and feature extraction.}
We follow the HarmBench adversarial-evaluation protocol~\citep{mazeika2024harmbenchstandardizedevaluationframework}.
For each model, the 400 HarmBench behaviors are split into 320 behaviors for training the latent safety constraints and 80 held-out behaviors for final evaluation.
For each attack method, we generate one test case per behavior.
To construct the barrier-training dataset, we run the target LLM on the 320 training behaviors and extract per-token hidden states from layer $L=20$.
Each response is labeled at the response level using \texttt{HarmBench-Llama-2-13b-cls}~\citep{mazeika2024harmbench}; the response label is then assigned to all extracted hidden states from that response, as described in Appendix~\ref{appendix:hidden-state-counts}.
Baseline-specific hyperparameters and implementation choices are summarized in Appendix~\ref{sec:implementation-details-baselines}.

\para{Evaluation protocol.}
The \texttt{SaP} numbers in our tables may differ from those 
reported in the \texttt{SaP} paper~\citep{ICML2025_chenlearning} 
because we use a 
different training/evaluation protocol.
Specifically, our HarmBench setting splits the behavior set: latent constraints are trained on one set of behaviors and evaluated on held-out behaviors, whereas some previously reported settings include behaviors that also appear during training.
The held-out split is stricter and better measures whether the learned constraint generalizes beyond the training behaviors.

\para{Detailed ASR results.}
For comprehensiveness, we include Tables~\ref{tab:gemma-results-transposed}--\ref{tab:qwen-results-transposed} to report per-attack HarmBench ASR for Gemma-2-9B, Ministral-8B, Llama-2-7B, and Qwen2-1.5B.
Learning-based methods are reported as averages over independent trials with standard deviations.
Prompting, fixed-direction, and heuristic baselines are evaluated deterministically under greedy decoding.

\begin{table*}[!t]
\centering
\caption{Per-attack HarmBench ASR for \texttt{Gemma-2-9B}. Values are percentages; lower is better. \algo{} denotes the default LSE variant. Learning-based methods are reported as mean$_{\pm\mathrm{std}}$ over independent runs, while deterministic baselines use fixed greedy decoding.}
\label{tab:gemma-results-transposed}
\vskip 0.15in
\vspace{-5mm}
\begin{small}
\setlength{\tabcolsep}{0pt}
\begin{tabular*}{\textwidth}{@{\extracolsep{\fill}}lcccccccc}
\toprule
\textbf{Attack Method} & \textbf{Base} & \textbf{Self-Reminder} & \textbf{Circuit Breaker} & \textbf{ActAdd} & \textbf{DirAbl} & \textbf{ReFT-r1} & \textbf{SaP} & \textbf{\algo{}} \\ 
\midrule
AutoDAN & 62.50 & 65.00 & $27.75_{\pm 11.74}$ & 58.75 & 22.50 & $45.42_{\pm 8.13}$ & $27.08_{\pm 14.49}$ & $0.00_{\pm 0.00}$ \\
AutoPrompt & 17.50 & 3.75 & $7.50_{\pm 2.80}$ & 21.25 & 13.75 & $10.83_{\pm 4.02}$ & $6.25_{\pm 1.25}$ & $0.00_{\pm 0.00}$ \\
DirectReq. & 5.00 & 2.50 & $4.75_{\pm 0.56}$ & 5.00 & 7.50 & $4.17_{\pm 0.72}$ & $8.00_{\pm 0.94}$ & $0.00_{\pm 0.00}$ \\
GBDA & 4.50 & 0.75 & $5.00_{\pm 0.56}$ & 4.50 & 9.50 & $2.92_{\pm 1.70}$ & $4.17_{\pm 0.38}$ & $0.00_{\pm 0.00}$ \\
GCG & 33.75 & 7.50 & $12.75_{\pm 6.15}$ & 31.25 & 16.25 & $20.00_{\pm 10.00}$ & $14.58_{\pm 2.60}$ & $0.00_{\pm 0.00}$ \\
HumanJb. & 32.75 & 30.00 & $17.65_{\pm 4.81}$ & 32.50 & 23.25 & $27.17_{\pm 5.16}$ & $20.92_{\pm 6.01}$ & $0.00_{\pm 0.00}$ \\
PAP & 6.25 & 3.75 & $3.75_{\pm 0.00}$ & 5.00 & 7.50 & $3.33_{\pm 0.72}$ & $0.42_{\pm 0.72}$ & $0.00_{\pm 0.00}$ \\
PEZ & 8.25 & 2.75 & $5.55_{\pm 0.11}$ & 7.75 & 11.00 & $4.25_{\pm 2.54}$ & $5.33_{\pm 0.29}$ & $0.00_{\pm 0.00}$ \\
UAT & 5.00 & 0.00 & $4.50_{\pm 1.12}$ & 5.00 & 11.25 & $2.92_{\pm 1.91}$ & $2.92_{\pm 1.44}$ & $0.00_{\pm 0.00}$ \\
\midrule
Average & 19.50 & 12.89 & $9.91_{\pm 2.60}$ & 19.00 & 13.61 & $12.30_{\pm 3.19}$ & $8.00_{\pm 0.94}$ & $0.00_{\pm 0.00}$ \\
\bottomrule
\end{tabular*}
\end{small}\end{table*}

\begin{table*}[!t]
\centering
\caption{Per-attack HarmBench ASR for \texttt{Ministral-8B}. Values are percentages; lower is better. \algo{} denotes the default LSE variant. Learning-based methods are reported as mean$_{\pm\mathrm{std}}$ over independent runs, while deterministic baselines use fixed greedy decoding.}
\label{tab:mistral-results-transposed}
\vskip 0.15in
\vspace{-5mm}
\begin{small}
\setlength{\tabcolsep}{0pt}
\begin{tabular*}{\textwidth}{@{\extracolsep{\fill}}lcccccccc}
\toprule
\textbf{Attack Method} & \textbf{Base} & \textbf{Self-Reminder} & \textbf{Circuit Breaker} & \textbf{ActAdd} & \textbf{DirAbl} & \textbf{ReFT-r1} & \textbf{SaP} & \textbf{\algo{}} \\ 
\midrule
AutoDAN & 58.75 & 61.25 & $32.50_{\pm 3.42}$ & 62.50 & 61.25 & $59.58_{\pm 1.44}$ & $35.83_{\pm 2.60}$ & $6.50_{\pm 14.53}$ \\
AutoPrompt & 62.50 & 12.50 & $34.25_{\pm 0.68}$ & 61.25 & 63.75 & $52.08_{\pm 7.53}$ & $28.33_{\pm 9.38}$ & $0.00_{\pm 0.00}$ \\
DirectReq. & 41.25 & 12.50 & $27.25_{\pm 1.37}$ & 43.75 & 43.75 & $36.67_{\pm 0.72}$ & $26.25_{\pm 2.50}$ & $0.00_{\pm 0.00}$ \\
GBDA & 61.25 & 12.25 & $37.95_{\pm 4.52}$ & 60.50 & 60.75 & $50.08_{\pm 3.30}$ & $25.92_{\pm 9.71}$ & $0.00_{\pm 0.00}$ \\
GCG & 62.50 & 25.00 & $39.25_{\pm 0.68}$ & 62.50 & 62.50 & $48.75_{\pm 7.60}$ & $27.50_{\pm 4.51}$ & $0.00_{\pm 0.00}$ \\
HumanJb. & 47.50 & 38.25 & $28.95_{\pm 3.83}$ & 48.00 & 47.25 & $36.08_{\pm 6.00}$ & $23.58_{\pm 1.04}$ & $0.95_{\pm 2.12}$ \\
PAP & 31.25 & 10.00 & $24.00_{\pm 1.37}$ & 31.25 & 31.25 & $19.58_{\pm 2.60}$ & $12.50_{\pm 1.25}$ & $0.00_{\pm 0.00}$ \\
PEZ & 38.25 & 4.25 & $21.35_{\pm 1.92}$ & 37.00 & 39.25 & $20.33_{\pm 1.84}$ & $14.83_{\pm 2.32}$ & $0.05_{\pm 0.11}$ \\
UAT & 57.50 & 6.25 & $35.50_{\pm 0.68}$ & 55.00 & 56.25 & $39.58_{\pm 1.91}$ & $20.42_{\pm 2.60}$ & $0.00_{\pm 0.00}$ \\
\midrule
Average & 51.19 & 20.25 & $30.19_{\pm 2.51}$ & 51.31 & 51.78 & $37.56_{\pm 3.31}$ & $23.91_{\pm 3.49}$ & $0.83_{\pm 1.86}$ \\
\bottomrule
\end{tabular*}
\end{small}
\vspace{-5mm}\end{table*}

\begin{table*}[!t]
\centering
\caption{Per-attack HarmBench ASR for \texttt{Llama-2-7B}. Values are percentages; lower is better. \algo{} denotes the default LSE variant. Learning-based methods are reported as mean$_{\pm\mathrm{std}}$ over independent runs, while deterministic baselines use fixed greedy decoding.}
\label{tab:llama-results-transposed}
\vskip 0.15in
\vspace{-5mm}
\begin{small}
\setlength{\tabcolsep}{0pt}
\begin{tabular*}{\textwidth}{@{\extracolsep{\fill}}lcccccccc}
\toprule
\textbf{Attack Method} & \textbf{Base} & \textbf{Self-Reminder} & \textbf{Circuit Breaker} & \textbf{ActAdd} & \textbf{DirAbl} & \textbf{ReFT-r1} & \textbf{SaP} & \textbf{\algo{}} \\ 
\midrule
AutoDAN & 7.50 & 6.25 & $9.17_{\pm 1.44}$ & 6.25 & 3.75 & $7.50_{\pm 2.17}$ & $9.17_{\pm 0.72}$ & $1.00_{\pm 1.63}$ \\
AutoPrompt & 11.25 & 2.50 & $13.75_{\pm 0.00}$ & 13.75 & 5.00 & $11.25_{\pm 1.25}$ & $15.42_{\pm 1.44}$ & $3.50_{\pm 5.48}$ \\
DirectReq. & 2.50 & 2.50 & $2.92_{\pm 0.72}$ & 2.50 & 0.00 & $1.67_{\pm 0.72}$ & $2.50_{\pm 0.00}$ & $0.50_{\pm 1.12}$ \\
GBDA & 2.00 & 2.00 & $2.42_{\pm 0.29}$ & 2.00 & 1.00 & $2.42_{\pm 0.52}$ & $2.67_{\pm 0.14}$ & $0.55_{\pm 1.10}$ \\
GCG & 22.50 & 7.50 & $23.33_{\pm 0.72}$ & 25.00 & 11.25 & $21.67_{\pm 1.44}$ & $22.50_{\pm 1.25}$ & $4.75_{\pm 8.12}$ \\
HumanJb. & 1.25 & 1.75 & $1.50_{\pm 0.00}$ & 1.50 & 1.00 & $0.83_{\pm 0.38}$ & $1.25_{\pm 0.00}$ & $0.20_{\pm 0.45}$ \\
PAP & 2.50 & 2.50 & $2.08_{\pm 0.72}$ & 2.50 & 0.00 & $2.92_{\pm 1.91}$ & $3.33_{\pm 0.72}$ & $0.25_{\pm 0.56}$ \\
PEZ & 2.50 & 1.50 & $2.67_{\pm 0.14}$ & 2.25 & 3.00 & $1.67_{\pm 0.63}$ & $2.58_{\pm 0.14}$ & $0.50_{\pm 0.71}$ \\
UAT & 2.50 & 2.50 & $2.50_{\pm 0.00}$ & 2.50 & 3.75 & $2.50_{\pm 0.00}$ & $2.50_{\pm 0.00}$ & $0.25_{\pm 0.56}$ \\
\midrule
Average & 6.06 & 3.22 & $6.70_{\pm 0.31}$ & 6.47 & 3.19 & $3.43_{\pm 0.35}$ & $6.88_{\pm 0.21}$ & $1.28_{\pm 2.17}$ \\
\bottomrule
\end{tabular*}
\end{small}
\vspace{-4mm}\end{table*}

\begin{table*}[!t]
\centering
\caption{Per-attack HarmBench ASR for \texttt{Qwen2-1.5B}. Values are percentages; lower is better. \algo{} denotes the default LSE variant. Learning-based methods are reported as mean$_{\pm\mathrm{std}}$ over independent runs, while deterministic baselines use fixed greedy decoding.}
\label{tab:qwen-results-transposed}
\vskip 0.15in
\vspace{-5mm}
\begin{small}
\setlength{\tabcolsep}{0pt}
\begin{tabular*}{\textwidth}{@{\extracolsep{\fill}}lcccccccc}
\toprule
\textbf{Attack Method} & \textbf{Base} & \textbf{Self-Reminder} & \textbf{Circuit Breaker} & \textbf{ActAdd} & \textbf{DirAbl} & \textbf{ReFT-r1} & \textbf{SaP} & \textbf{\algo{}} \\ 
\midrule
AutoDAN & 27.50 & 33.75 & $42.92_{\pm 7.94}$ & 7.50 & 27.50 & $4.58_{\pm 0.72}$ & $4.17_{\pm 4.39}$ & $4.75_{\pm 3.79}$ \\
AutoPrompt & 30.00 & 3.75 & $35.00_{\pm 1.25}$ & 27.50 & 17.50 & $1.25_{\pm 1.25}$ & $8.75_{\pm 3.31}$ & $0.00_{\pm 0.00}$ \\
DirectReq. & 17.50 & 6.25 & $8.75_{\pm 2.17}$ & 6.25 & 11.25 & $8.75_{\pm 2.17}$ & $5.00_{\pm 3.31}$ & $0.00_{\pm 0.00}$ \\
GBDA & 28.25 & 4.25 & $33.50_{\pm 0.43}$ & 26.75 & 25.50 & $3.92_{\pm 0.80}$ & $6.67_{\pm 5.92}$ & $0.00_{\pm 0.00}$ \\
GCG & 47.50 & 18.75 & $50.00_{\pm 3.31}$ & 51.25 & 21.25 & $8.33_{\pm 2.60}$ & $7.08_{\pm 8.04}$ & $0.00_{\pm 0.00}$ \\
HumanJb. & 12.75 & 6.25 & $9.92_{\pm 0.95}$ & 7.25 & 10.50 & $6.42_{\pm 0.88}$ & $3.50_{\pm 1.56}$ & $0.90_{\pm 0.82}$ \\
PAP & 11.25 & 5.00 & $12.50_{\pm 2.17}$ & 7.50 & 6.25 & $3.33_{\pm 1.44}$ & $4.17_{\pm 2.60}$ & $0.00_{\pm 0.00}$ \\
PEZ & 15.25 & 1.25 & $6.92_{\pm 0.52}$ & 12.00 & 13.00 & $1.67_{\pm 0.52}$ & $4.25_{\pm 1.80}$ & $0.00_{\pm 0.00}$ \\
UAT & 25.00 & 2.50 & $15.42_{\pm 1.44}$ & 26.25 & 20.00 & $1.25_{\pm 0.00}$ & $4.58_{\pm 1.91}$ & $0.25_{\pm 0.56}$ \\
\midrule
Average & 23.89 & 9.08 & $23.88_{\pm 1.47}$ & 19.14 & 16.97 & $4.17_{\pm 0.47}$ & $5.35_{\pm 3.44}$ & $0.66_{\pm 0.47}$ \\
\bottomrule
\end{tabular*}
\end{small}
\end{table*}

\subsection{Steering Strength Ablations}
\label{appendix:variant-selection-details}
This section studies how the CBF steering strength $\alpha$ controls the safety--utility trade-off.
Larger $\alpha$ enforces a stronger correction toward the learned safe set, which generally improves jailbreak robustness but can increase over-refusal or reduce downstream utility if the intervention becomes too aggressive.
We report two complementary sweeps: Table~\ref{tab:qwen-top2-alpha-ablation} ablates $\alpha$ for the Top-2 variant on \texttt{Qwen2-1.5B}, while Table~\ref{tab:lse-merge-alpha-sweep} ablates the default LSE variant across model families with $\kappa=100$.

\begin{table*}[!t]
\centering
\caption{\algo{} (LSE) steering-strength ablation across model families. We set $\kappa=100$ and vary $\alpha$. ASR, AA ASR, and XSTest are lower-is-better; MMLU and GSM8K are higher-is-better.}
\label{tab:lse-merge-alpha-sweep}
\vskip 0.15in
\vspace{-5mm}
\begin{small}
\setlength{\tabcolsep}{3pt}
\begin{tabular*}{\textwidth}{@{\extracolsep{\fill}}llccccc}
\toprule
\textbf{Model} & \textbf{$\alpha$} & \textbf{ASR} $\downarrow$ & \textbf{AA ASR} $\downarrow$ & \textbf{XSTest} $\downarrow$ & \textbf{MMLU} $\uparrow$ & \textbf{GSM8K} $\uparrow$ \\
\midrule
\multirow{7}{*}{Gemma-2-9B}
 & 0.001 & $3.12_{\pm 3.42}$ & $66.00_{\pm 0.00}$ & $33.00_{\pm 0.66}$ & $71.87_{\pm 0.01}$ & $73.62_{\pm 0.00}$ \\
 & 0.01 & $0.00_{\pm 0.00}$ & $73.00_{\pm 1.10}$ & $35.00_{\pm 0.66}$ & $71.82_{\pm 0.04}$ & $73.81_{\pm 0.12}$ \\
 & 0.1 & $0.00_{\pm 0.00}$ & $53.00_{\pm 23.00}$ & $32.00_{\pm 0.44}$ & $71.87_{\pm 0.09}$ & $73.65_{\pm 0.04}$ \\
 & 0.2 & $0.00_{\pm 0.00}$ & $64.00_{\pm 6.57}$ & $33.80_{\pm 0.22}$ & $71.25_{\pm 0.55}$ & $73.50_{\pm 0.04}$ \\
 & 0.3 & $0.00_{\pm 0.00}$ & $64.00_{\pm 8.76}$ & $33.80_{\pm 1.10}$ & $71.70_{\pm 0.16}$ & $72.93_{\pm 0.75}$ \\
 & 0.5 & $0.00_{\pm 0.00}$ & $69.00_{\pm 5.48}$ & $34.00_{\pm 3.07}$ & $68.58_{\pm 3.19}$ & $71.65_{\pm 1.91}$ \\
 & 1.0 & $0.00_{\pm 0.00}$ & $67.00_{\pm 1.10}$ & $34.20_{\pm 1.53}$ & $65.28_{\pm 6.62}$ & $67.21_{\pm 6.93}$ \\
\midrule
\multirow{7}{*}{Qwen2-1.5B}
 & 0.001 & $0.20_{\pm 0.26}$ & $56.80_{\pm 9.34}$ & $28.00_{\pm 1.70}$ & $55.72_{\pm 0.04}$ & $13.03_{\pm 0.14}$ \\
 & 0.01 & $0.66_{\pm 0.47}$ & $24.80_{\pm 1.79}$ & $29.44_{\pm 1.49}$ & $55.65_{\pm 0.05}$ & $12.95_{\pm 0.63}$ \\
 & 0.1 & $0.06_{\pm 0.10}$ & $12.67_{\pm 13.61}$ & $60.13_{\pm 21.70}$ & $50.24_{\pm 4.58}$ & $12.36_{\pm 1.71}$ \\
 & 0.2 & $0.02_{\pm 0.03}$ & $4.67_{\pm 5.03}$ & $66.53_{\pm 14.34}$ & $45.94_{\pm 7.02}$ & $11.37_{\pm 1.93}$ \\
 & 0.3 & $0.04_{\pm 0.07}$ & $1.33_{\pm 2.31}$ & $69.73_{\pm 8.95}$ & $43.59_{\pm 6.07}$ & $10.69_{\pm 0.72}$ \\
 & 0.5 & $0.00_{\pm 0.00}$ & $9.33_{\pm 11.02}$ & $74.13_{\pm 2.81}$ & $40.45_{\pm 4.03}$ & $9.30_{\pm 1.25}$ \\
 & 1.0 & $0.00_{\pm 0.00}$ & $2.67_{\pm 1.15}$ & $76.93_{\pm 4.64}$ & $37.80_{\pm 2.81}$ & $7.25_{\pm 0.16}$ \\
\midrule
\multirow{7}{*}{Ministral-8B}
 & 0.001 & $18.15_{\pm 25.05}$ & $74.40_{\pm 4.34}$ & $15.76_{\pm 0.92}$ & $64.59_{\pm 0.05}$ & $64.58_{\pm 0.67}$ \\
 & 0.01 & $8.57_{\pm 14.85}$ & $61.33_{\pm 11.02}$ & $15.47_{\pm 6.89}$ & $64.48_{\pm 0.09}$ & $53.02_{\pm 14.03}$ \\
 & 0.1 & $0.83_{\pm 1.86}$ & $19.60_{\pm 2.61}$ & $20.96_{\pm 7.04}$ & $63.07_{\pm 1.95}$ & $62.55_{\pm 2.46}$ \\
 & 0.2 & $0.32_{\pm 0.55}$ & $11.33_{\pm 16.29}$ & $68.27_{\pm 23.81}$ & $55.83_{\pm 11.41}$ & $10.29_{\pm 14.54}$ \\
 & 0.3 & $0.12_{\pm 0.21}$ & $6.00_{\pm 10.39}$ & $65.07_{\pm 25.59}$ & $54.48_{\pm 12.81}$ & $7.28_{\pm 11.01}$ \\
 & 0.5 & $0.02_{\pm 0.03}$ & $3.33_{\pm 4.16}$ & $65.47_{\pm 28.49}$ & $48.69_{\pm 11.62}$ & $6.90_{\pm 10.92}$ \\
 & 1.0 & $0.00_{\pm 0.00}$ & $7.33_{\pm 9.45}$ & $57.87_{\pm 33.11}$ & $43.89_{\pm 13.77}$ & $5.79_{\pm 8.68}$ \\
\midrule
\multirow{7}{*}{Llama-2-7B}
 & 0.001 & $0.93_{\pm 1.58}$ & $18.40_{\pm 11.70}$ & $42.32_{\pm 0.52}$ & $46.55_{\pm 0.01}$ & $21.12_{\pm 0.15}$ \\
 & 0.01 & $1.25_{\pm 1.91}$ & $18.67_{\pm 14.47}$ & $43.60_{\pm 0.80}$ & $46.54_{\pm 0.01}$ & $21.43_{\pm 0.64}$ \\
 & 0.1 & $1.28_{\pm 2.17}$ & $1.60_{\pm 0.89}$ & $43.36_{\pm 0.83}$ & $46.56_{\pm 0.04}$ & $20.94_{\pm 0.62}$ \\
 & 0.2 & $0.93_{\pm 1.32}$ & $2.00_{\pm 2.00}$ & $44.80_{\pm 2.40}$ & $46.12_{\pm 0.67}$ & $18.04_{\pm 4.80}$ \\
 & 0.3 & $0.69_{\pm 1.05}$ & $3.33_{\pm 3.06}$ & $44.13_{\pm 1.97}$ & $45.79_{\pm 1.32}$ & $17.87_{\pm 4.62}$ \\
 & 0.5 & $0.38_{\pm 0.60}$ & $6.00_{\pm 2.00}$ & $45.47_{\pm 1.51}$ & $45.14_{\pm 2.28}$ & $16.20_{\pm 4.71}$ \\
 & 1.0 & $0.30_{\pm 0.52}$ & $3.33_{\pm 4.16}$ & $49.07_{\pm 1.29}$ & $44.36_{\pm 2.89}$ & $16.45_{\pm 4.08}$ \\
\bottomrule
\end{tabular*}
\end{small}
\vspace{-3mm}

\end{table*}

\begin{table}[!t]
\centering
\caption{Ablation study on the effect of steering strength $\alpha$ for \algo{} (Top-2) on Qwen2-1.5B. ASR, AA ASR, and XSTest are lower-is-better; MMLU and GSM8K are higher-is-better.}
\label{tab:qwen-top2-alpha-ablation}
\begin{small}
\setlength{\tabcolsep}{4pt}
\begin{tabular}{lccccc}
\toprule
\textbf{$\alpha$} & \textbf{ASR} $\downarrow$ & \textbf{AA ASR} $\downarrow$ & \textbf{XSTest} $\downarrow$ & \textbf{MMLU} $\uparrow$ & \textbf{GSM8K} $\uparrow$ \\
\midrule
0.01 & $5.05_{\pm 7.95}$ & $31.60_{\pm 4.98}$ & $30.32_{\pm 2.97}$ & $\mathbf{55.69_{\pm 0.03}}$ & $\mathbf{12.43_{\pm 0.79}}$ \\
0.1 & $2.44_{\pm 4.09}$ & $9.60_{\pm 10.43}$ & $49.60_{\pm 23.03}$ & $55.24_{\pm 0.38}$ & $11.65_{\pm 1.69}$ \\
0.2 & $1.53_{\pm 2.65}$ & $3.20_{\pm 4.60}$ & $58.00_{\pm 15.49}$ & $54.83_{\pm 0.42}$ & $10.46_{\pm 2.58}$ \\
0.3 & $0.70_{\pm 1.22}$ & $0.80_{\pm 1.10}$ & $70.24_{\pm 4.68}$ & $54.61_{\pm 0.36}$ & $9.73_{\pm 1.62}$ \\
0.5 & $0.14_{\pm 0.24}$ & $4.40_{\pm 8.76}$ & $76.00_{\pm 0.69}$ & $54.42_{\pm 0.24}$ & $9.25_{\pm 2.27}$ \\
1.0 & $0.00_{\pm 0.00}$ & $3.20_{\pm 1.79}$ & $74.32_{\pm 3.18}$ & $54.27_{\pm 0.23}$ & $7.99_{\pm 1.61}$ \\
\bottomrule
\end{tabular}
\end{small}
\end{table}

Table~\ref{tab:qwen-top2-alpha-ablation} shows the expected monotonic safety--strength pattern for Top-2: increasing $\alpha$ steadily reduces static ASR, while utility decreases gradually and over-refusal increases as the intervention becomes stronger.
This confirms that $\alpha$ provides a direct knob for choosing the desired operating point rather than changing the learned barrier itself.
The adaptive-attack results are less strictly monotonic because the attack search and response generation introduce additional variance, but the best low-ASR settings still occur at moderate-to-large steering strengths.

Table~\ref{tab:lse-merge-alpha-sweep} shows a similar safety--utility trade-off for the LSE variant across model families.
Small $\alpha$ values preserve utility closest to the base model but may leave adaptive attacks insufficiently suppressed, whereas larger $\alpha$ values provide stronger defense at the cost of higher over-refusal and lower utility on some models.
This motivates selecting $\alpha$ per deployment requirement: conservative settings are appropriate when preserving benign capability is most important, while stronger settings are appropriate when adversarial robustness is prioritized.

\subsection{Extended Utility Results}
\label{appendix:extended-utility-results}
In addition to MMLU and GSM8K in the main tables, we evaluate Gemma-2-9B on broader utility benchmarks in Table~\ref{tab:gemma-extended-utility}.
These results test whether the safety interventions preserve instruction following, factuality, conversational quality, and difficult question answering beyond the two primary utility metrics.
\begin{table*}[!t]
\centering
\caption{Extended utility benchmark results for Gemma-2-9B. Higher is better for all metrics.}
\label{tab:gemma-extended-utility}
\vskip 0.15in
\vspace{-5mm}
\begin{small}
\setlength{\tabcolsep}{1pt}
\begin{tabular*}{\textwidth}{@{\extracolsep{\fill}}lcccccccc}
\toprule
\textbf{Metric} & \textbf{Base} & \textbf{Self-Rem.} 
& \textbf{Cir. Breaker} 
& \textbf{ActAdd} & \textbf{DirAbl} & \textbf{ReFT-r1} & \textbf{SaP} & \textbf{\algo{}} \\
\midrule
IFEval $\uparrow$ & 65.80 & 75.30 & $62.03_{\pm 10.84}$ & 66.73 & 33.09 & $74.90_{\pm 0.14}$ & $66.23_{\pm 0.39}$ & $64.85_{\pm 0.43}$ \\
TruthfulQA $\uparrow$ & 60.14 & 61.88 & $61.03_{\pm 1.38}$ & 60.15 & 55.35 & $58.68_{\pm 0.86}$ & $60.14_{\pm 0.02}$ & $60.57_{\pm 0.48}$ \\
MT-Bench $\uparrow$ & 7.19 & 7.14 & $5.91_{\pm 2.15}$ & 7.17 & 4.96 & $7.19_{\pm 0.08}$ & $7.16_{\pm 0.05}$ & $7.15_{\pm 0.03}$ \\
GPQA Diamond $\uparrow$ & 38.38 & 35.35 & $38.04_{\pm 0.58}$ & 38.38 & 37.88 & $34.85_{\pm 3.50}$ & $38.38_{\pm 0.00}$ & $38.18_{\pm 0.45}$ \\
\bottomrule
\end{tabular*}
\end{small}
\vspace{-3mm}
\end{table*}
Overall, \algo{} remains close to the base model on the extended utility suite: it preserves utility across MT-Bench~\citep{zheng2023judging}, TruthfulQA~\citep{lin2022truthfulqa} GPQA Diamond~\citep{rein2023gpqagraduatelevelgoogleproofqa}, and IFEval~\citep{zhou2023instruction} and avoids the severe degradation observed for stronger fixed-direction interventions such as directional ablation.
This supports the conclusion that classifier-guided CBF steering can improve safety while maintaining broad model utility.

\subsection{Computational Benchmarking}
\label{sec:computational-benchmark-details}
This section provides additional details for the main-paper steering-latency results in Table~\ref{tab:latency} and reports full generation cost in Table~\ref{tab:qwen-generation-efficiency-variants}.
For Table~\ref{tab:latency}, we measure inference-time overhead using hidden-state activations extracted from the adversarial unsafe dataset.
All per-token latency experiments are run on a single NVIDIA H200 GPU with batch size $1$, matching the real-time decoding setting where a controller must act at each generated token.
We compare \texttt{SaP} with $K=30$ polytope facets against \algo{} with neural CBF heads using comparable parameter counts.
Following the \texttt{SaP} implementation~\citep{ICML2025_chenlearning}, the baseline uses $100$ gradient-based optimization steps.

\para{Full generation cost.}
To complement the per-token steering latency in Table~\ref{tab:latency}, we also measure end-to-end completion-generation time on the HarmBench DirectRequest split.
This full-generation benchmark uses the same single NVIDIA H200 GPU and batch size $1$ setup, with identical prompts, greedy decoding, random seed, and maximum-new-token budget across methods.

\begin{table}[!t]
\centering
\caption{HarmBench generation time on \texttt{Qwen2-1.5B}. We time only the completion-generation step for the DirectRequest split, using identical prompts, seed, greedy decoding, batch size, and maximum-new-token budget across methods.}
\label{tab:qwen-generation-efficiency-variants}
\begin{small}
\setlength{\tabcolsep}{10pt}
\begin{tabular}{lc}
\toprule
\textbf{Method} & \textbf{Generation Time (s)} $\downarrow$ \\
\midrule
Base model & \textbf{35.27} \\
SaP & 102.05 \\
\algo{} (LSE) & 39.44 \\
\bottomrule
\end{tabular}
\end{small}
\vspace{-4mm}
\end{table}

Table~\ref{tab:qwen-generation-efficiency-variants} shows that the default LSE variant adds only modest overhead relative to the unmodified model, while \texttt{SaP} is substantially slower because it solves an optimization problem during decoding.
This motivates using LSE as the default \algo{} variant in the main experiments: it composes multiple learned safety constraints while preserving near-base generation cost.

\para{Neural CBF sizes ablations.}
We next isolate two sources of computational scaling: the neural CBF architecture size and the number of barriers $K$.
Table~\ref{tab:phase-scaling-params} varies the number of trainable neural-CBF parameters while keeping $K$ fixed, and Table~\ref{tab:phase-scaling-k} varies $K$ while keeping the per-barrier architecture fixed.
These measurements report steering overhead per generated token and therefore complement the end-to-end generation-time comparison above.

\begin{table*}[!t]
  \centering
  \captionsetup{justification=raggedright,singlelinecheck=false}
  \caption{Computational ablation over neural CBF size. We vary the number of trainable barrier parameters while keeping the number of CBF heads fixed. Values are per-token steering latency in milliseconds, reported as mean$\pm$std. The largest \algo{} setting has 23.64M parameters, comparable to the 25.67M-parameter SaP safety module. For \algo{} (LSE), barrier checking and gradient computation are fused into an operation and the cost appears under ``Check CBFs.''}
  \label{tab:phase-scaling-params}
  \begin{small}
  \setlength{\tabcolsep}{0pt}
  \begin{tabular*}{\textwidth}{@{\extracolsep{\fill}}l r c c c c c}
  \toprule
  \textbf{Method} & \textbf{\#Params.} & \textbf{Check CBFs} & \textbf{Select CBFs} & \textbf{Gradient Calc.} & \textbf{Compute Steering} & \textbf{Total} \\
  \midrule
  \multirow{6}{*}{\algo{} (Top-2)}
  & 0.8M    & $0.36_{\pm 0.04}$ & $0.06_{\pm 0.01}$ & $1.24_{\pm 0.11}$ & $5.15_{\pm 10.42}$ & $6.82_{\pm 10.42}$ \\
  & 6.3M  & $0.37_{\pm 0.01}$ & $0.06_{\pm 0.00}$ & $1.24_{\pm 0.05}$ & $3.53_{\pm 8.50}$  & $5.21_{\pm 8.50}$ \\
  & 12.6M & $0.41_{\pm 0.04}$ & $0.06_{\pm 0.01}$ & $1.31_{\pm 0.13}$ & $3.52_{\pm 8.35}$  & $5.31_{\pm 8.35}$ \\
  & 15.7M & $0.82_{\pm 0.01}$ & $0.06_{\pm 0.00}$ & $2.13_{\pm 0.06}$ & $28.78_{\pm 6.98}$ & $31.79_{\pm 6.98}$ \\
  & 19.5M & $0.85_{\pm 0.01}$ & $0.06_{\pm 0.00}$ & $2.12_{\pm 0.06}$ & $24.23_{\pm 10.62}$ & $27.26_{\pm 10.62}$ \\
  & 23.6M & $0.87_{\pm 0.10}$ & $0.06_{\pm 0.01}$ & $2.17_{\pm 0.21}$ & $26.34_{\pm 10.99}$ & $29.45_{\pm 11.00}$ \\
  \midrule
  \multirow{6}{*}{\algo{} (QP)}
  & 0.8M    & $0.37_{\pm 0.01}$ & $0.06_{\pm 0.00}$ & $2.45_{\pm 0.13}$ & $22.53_{\pm 12.66}$ & $25.41_{\pm 12.66}$ \\
  & 6.3M  & $0.38_{\pm 0.05}$ & $0.06_{\pm 0.02}$ & $2.47_{\pm 0.22}$ & $14.30_{\pm 14.72}$ & $17.20_{\pm 14.73}$ \\
  & 12.6M & $0.41_{\pm 0.05}$ & $0.06_{\pm 0.01}$ & $2.51_{\pm 0.21}$ & $7.47_{\pm 12.30}$  & $10.44_{\pm 12.30}$ \\
  & 15.7M & $0.82_{\pm 0.09}$ & $0.06_{\pm 0.01}$ & $5.07_{\pm 0.41}$ & $28.01_{\pm 4.38}$  & $33.95_{\pm 4.44}$ \\
  & 19.5M & $0.85_{\pm 0.03}$ & $0.06_{\pm 0.00}$ & $5.06_{\pm 0.07}$ & $27.35_{\pm 5.66}$  & $33.32_{\pm 5.66}$ \\
  & 23.6M & $0.84_{\pm 0.09}$ & $0.06_{\pm 0.01}$ & $5.12_{\pm 0.43}$ & $27.75_{\pm 5.82}$  & $33.77_{\pm 5.86}$ \\
  \midrule
  \multirow{6}{*}{\algo{} (LSE)}
  & 0.8M    & $1.09_{\pm 0.05}$ & $0.04_{\pm 0.00}$ & $0.01_{\pm 0.00}$ & $0.13_{\pm 0.00}$ & $1.28_{\pm 0.05}$ \\
  & 6.3M  & $1.09_{\pm 0.02}$ & $0.04_{\pm 0.00}$ & $0.01_{\pm 0.00}$ & $0.13_{\pm 0.00}$ & $1.28_{\pm 0.02}$ \\
  & 12.6M & $1.15_{\pm 0.08}$ & $0.04_{\pm 0.00}$ & $0.01_{\pm 0.00}$ & $0.13_{\pm 0.00}$ & $1.34_{\pm 0.08}$ \\
  & 15.7M & $2.15_{\pm 0.04}$ & $0.04_{\pm 0.00}$ & $0.01_{\pm 0.00}$ & $0.14_{\pm 0.03}$ & $2.34_{\pm 0.06}$ \\
  & 19.5M & $2.19_{\pm 0.07}$ & $0.04_{\pm 0.00}$ & $0.01_{\pm 0.00}$ & $0.13_{\pm 0.00}$ & $2.38_{\pm 0.07}$ \\
  & 23.6M & $2.21_{\pm 0.04}$ & $0.05_{\pm 0.00}$ & $0.01_{\pm 0.00}$ & $0.14_{\pm 0.04}$ & $2.40_{\pm 0.06}$ \\
  \bottomrule
  \end{tabular*}
  \end{small}
  \vspace{-4mm}
  \end{table*}

Table~\ref{tab:phase-scaling-params} shows that the LSE update remains lightweight as the neural CBF size increases.
For LSE, CBF checking and merged-barrier gradient computation are fused into a single autograd pass; therefore, its cost appears primarily in the ``Check CBFs'' column, while the separate ``Gradient Calc.'' column is near zero by construction.
By contrast, QP requires explicit gradient and solve steps whose runtime is more variable, making LSE the preferred default when both speed and modular composition are important.

\para{Ablation on number of constraints $K$.}
We next vary the number of learned CBF heads while keeping the per-head architecture fixed.
This experiment isolates how each merge rule scales as more safety constraints are checked and composed during decoding.

  \begin{table*}[!t]
  \centering
  \captionsetup{justification=raggedright,singlelinecheck=false}
  \caption{Computational ablation over the number of CBF $K$. We vary $K$ while keeping the per-head neural architecture fixed. Values are per-token steering latency in milliseconds, reported as mean$_{\pm\mathrm{std}}$.}
  \label{tab:phase-scaling-k}
  \begin{small}
  \setlength{\tabcolsep}{0pt}
  \begin{tabular*}{\textwidth}{@{\extracolsep{\fill}}l c c c c c c}
  \toprule
  \textbf{Method} & \textbf{$K$} & \textbf{Check CBFs} & \textbf{Select CBFs} & \textbf{Gradient Calc.} & \textbf{Compute Steering} & \textbf{Total} \\
  \midrule
  \multirow{8}{*}{\algo{} (Top-2)}
  & 4  & $0.37_{\pm 0.04}$ & $0.06_{\pm 0.01}$ & $1.24_{\pm 0.12}$ & $5.12_{\pm 10.32}$ & $6.78_{\pm 10.33}$ \\
  & 12 & $0.84_{\pm 0.04}$ & $0.06_{\pm 0.01}$ & $1.26_{\pm 0.07}$ & $14.19_{\pm 14.11}$ & $16.35_{\pm 14.11}$ \\
  & 20 & $1.32_{\pm 0.02}$ & $0.06_{\pm 0.00}$ & $1.26_{\pm 0.04}$ & $20.32_{\pm 13.90}$ & $22.97_{\pm 13.90}$ \\
  & 28 & $1.79_{\pm 0.04}$ & $0.06_{\pm 0.00}$ & $1.26_{\pm 0.05}$ & $6.95_{\pm 11.90}$ & $10.07_{\pm 11.90}$ \\
  & 36 & $2.29_{\pm 0.20}$ & $0.06_{\pm 0.01}$ & $1.27_{\pm 0.12}$ & $12.82_{\pm 14.59}$ & $16.44_{\pm 14.60}$ \\
  & 44 & $2.75_{\pm 0.03}$ & $0.06_{\pm 0.00}$ & $1.26_{\pm 0.02}$ & $26.02_{\pm 10.76}$ & $30.09_{\pm 10.76}$ \\
  & 52 & $3.15_{\pm 0.03}$ & $0.06_{\pm 0.00}$ & $1.24_{\pm 0.03}$ & $26.25_{\pm 8.64}$ & $30.69_{\pm 8.64}$ \\
  & 60 & $3.65_{\pm 0.03}$ & $0.06_{\pm 0.00}$ & $1.26_{\pm 0.04}$ & $24.23_{\pm 11.24}$ & $29.19_{\pm 11.24}$ \\
  \midrule
  \multirow{8}{*}{\algo{} (QP)}
  & 4  & $0.37_{\pm 0.01}$ & $0.06_{\pm 0.00}$ & $2.42_{\pm 0.04}$ & $22.24_{\pm 12.49}$ & $25.09_{\pm 12.49}$ \\
  & 12 & $0.86_{\pm 0.09}$ & $0.06_{\pm 0.01}$ & $12.28_{\pm 0.86}$ & $21.93_{\pm 13.22}$ & $35.13_{\pm 13.26}$ \\
  & 20 & $1.30_{\pm 0.11}$ & $0.06_{\pm 0.01}$ & $29.77_{\pm 1.54}$ & $24.32_{\pm 10.47}$ & $55.45_{\pm 10.60}$ \\
  & 28 & $1.76_{\pm 0.03}$ & $0.06_{\pm 0.00}$ & $55.31_{\pm 0.33}$ & $18.06_{\pm 13.50}$ & $75.19_{\pm 13.50}$ \\
  & 36 & $2.28_{\pm 0.21}$ & $0.05_{\pm 0.00}$ & $90.99_{\pm 2.69}$ & $24.95_{\pm 11.30}$ & $118.28_{\pm 11.67}$ \\
  & 44 & $2.72_{\pm 0.25}$ & $0.05_{\pm 0.01}$ & $132.04_{\pm 3.22}$ & $29.31_{\pm 1.53}$ & $164.13_{\pm 3.58}$ \\
  & 52 & $3.14_{\pm 0.03}$ & $0.05_{\pm 0.00}$ & $181.78_{\pm 0.72}$ & $29.51_{\pm 0.33}$ & $214.48_{\pm 0.83}$ \\
  & 60 & $3.64_{\pm 0.34}$ & $0.05_{\pm 0.01}$ & $241.60_{\pm 4.40}$ & $29.49_{\pm 1.40}$ & $274.77_{\pm 4.71}$ \\
  \midrule
  \multirow{8}{*}{\algo{} (LSE)}
  & 4  & $1.08_{\pm 0.02}$ & $0.04_{\pm 0.00}$ & $0.01_{\pm 0.00}$ & $0.13_{\pm 0.00}$ & $1.27_{\pm 0.02}$ \\
  & 12 & $2.22_{\pm 0.03}$ & $0.05_{\pm 0.00}$ & $0.01_{\pm 0.00}$ & $0.14_{\pm 0.03}$ & $2.42_{\pm 0.04}$ \\
  & 20 & $3.33_{\pm 0.06}$ & $0.05_{\pm 0.00}$ & $0.01_{\pm 0.00}$ & $0.14_{\pm 0.00}$ & $3.53_{\pm 0.06}$ \\
  & 28 & $4.46_{\pm 0.21}$ & $0.05_{\pm 0.00}$ & $0.01_{\pm 0.03}$ & $0.14_{\pm 0.04}$ & $4.66_{\pm 0.22}$ \\
  & 36 & $5.66_{\pm 0.24}$ & $0.05_{\pm 0.00}$ & $0.01_{\pm 0.00}$ & $0.14_{\pm 0.01}$ & $5.86_{\pm 0.24}$ \\
  & 44 & $6.61_{\pm 0.12}$ & $0.05_{\pm 0.00}$ & $0.01_{\pm 0.00}$ & $0.14_{\pm 0.00}$ & $6.81_{\pm 0.13}$ \\
  & 52 & $7.65_{\pm 0.10}$ & $0.05_{\pm 0.00}$ & $0.01_{\pm 0.00}$ & $0.14_{\pm 0.00}$ & $7.86_{\pm 0.11}$ \\
  & 60 & $8.63_{\pm 0.32}$ & $0.06_{\pm 0.01}$ & $0.01_{\pm 0.00}$ & $0.14_{\pm 0.01}$ & $8.84_{\pm 0.33}$ \\
  \bottomrule
  \end{tabular*}
  \end{small}
  \vspace{-4mm}
  \end{table*}

Table~\ref{tab:phase-scaling-k} shows the expected scaling pattern with the number of CBF heads.
QP becomes expensive as $K$ grows because it differentiates and solves over all active constraints, while LSE scales much more smoothly by merging the barriers into a single differentiable surrogate before computing the steering update.
This computational behavior is why we recommend LSE for the default multi-constraint setting and reserve Top-2 primarily as a simple closed-form ablation.

\subsection{Modular Composition of Safety Barriers}
\label{appendix:compose-cbf-exp}
For modular composition, we use WildGuardMix~\citep{han2024wildguardopenonestopmoderation}, which covers 14 safety categories.
For each category, we sample 400 behaviors and split them into 320 training behaviors and 80 held-out evaluation behaviors.
At inference time, this yields $1{,}120$ evaluation samples ($80\times14$).
We report the harmfulness rate measured by \texttt{HarmBench-Llama-2-13b-cls}~\citep{mazeika2024harmbench}.

Each category-specific safety constraint uses the neural CBF architecture in Appendix~\ref{appendix:barrier-implementation-details}.
To isolate composition, we train one CBF per semantic category and use a fixed steering strength $\alpha=1$ for all methods.
Table~\ref{tab:composition-results-detailed} reports category-level individual-barrier results, alternative Top-2/QP/LSE merge rules, and a linear-probe ablation that removes the neural hidden layer from each category barrier.

\begin{table*}[!t]
\centering
\caption{Gemma-2-9B LSE smoothing sensitivity for \algo{} with fixed $\alpha=0.2$. We report all metrics mean$_{\pm\mathrm{std}}$ over 5 runs.}
\label{tab:lse-kappa-sensitivity}
\vskip 0.15in
\vspace{-5mm}
\begin{small}
\setlength{\tabcolsep}{5pt}
\begin{tabular*}{\textwidth}{@{\extracolsep{\fill}}lccccc}
\toprule
\textbf{$\kappa$} & \textbf{ASR} $\downarrow$ & \textbf{AA ASR} $\downarrow$ & \textbf{XSTest} $\downarrow$ & \textbf{MMLU} $\uparrow$ & \textbf{GSM8K} $\uparrow$ \\
\midrule
1 & $0.00_{\pm 0.00}$ & $54.00_{\pm 19.70}$ & $36.80_{\pm 9.06}$ & $44.28_{\pm 9.97}$ & $54.69_{\pm 4.60}$ \\
10 & $0.00_{\pm 0.00}$ & $58.00_{\pm 19.70}$ & $37.87_{\pm 7.27}$ & $68.80_{\pm 1.97}$ & $70.00_{\pm 5.42}$ \\
25 & $0.00_{\pm 0.00}$ & $54.67_{\pm 20.03}$ & $40.00_{\pm 10.83}$ & $70.45_{\pm 1.26}$ & $68.16_{\pm 10.24}$ \\
50 & $0.00_{\pm 0.00}$ & $56.00_{\pm 19.70}$ & $41.20_{\pm 11.84}$ & $70.43_{\pm 1.16}$ & $67.90_{\pm 10.10}$ \\
100 & $0.00_{\pm 0.00}$ & $54.00_{\pm 18.33}$ & $38.53_{\pm 8.20}$ & $71.04_{\pm 0.63}$ & $67.42_{\pm 10.53}$ \\
\bottomrule
\end{tabular*}
\end{small}
\end{table*}

\begin{table*}[!t]
\centering
\caption{Detailed WildGuard modular-composition results for \algo{} on \texttt{Qwen2-1.5B}. This appendix table expands Table~\ref{tab:composition_results} by showing category-level individual CBF results, alternative Top-2/QP/LSE merging rules, and a linear-probe ablation that replaces each neural CBF with a linear barrier. The LSE rows use the selected main-table smoothing value $\kappa=0.01$.}
\label{tab:composition-results-detailed}
\vskip 0.15in
\vspace{-5mm}
\begin{small}
\setlength{\tabcolsep}{5pt}
\begin{tabular}{llc}
\toprule
\textbf{Barrier Type} & \textbf{Category / Merge Rule} & \textbf{Harmful Rate $\downarrow$} \\
\midrule
Original Model & -- & 22.77 \\
\midrule
\multirow{14}{*}{{Single \algo{}}}
& Copyright & $5.71_{\pm 0.00}$ \\
& Cyberattack & $3.12_{\pm 0.00}$ \\
& Defamation & $2.95_{\pm 0.00}$ \\
& Discrimination & $3.81_{\pm 1.91}$ \\
& Disinformation & $5.27_{\pm 0.00}$ \\
& Fraud & $3.66_{\pm 0.00}$ \\
& Harm Dissemination & $5.27_{\pm 0.00}$ \\
& Mental Health & $4.46_{\pm 0.00}$ \\
& Others & $5.30_{\pm 0.36}$ \\
& Private Info & $4.82_{\pm 0.00}$ \\
& Sensitive Info & $6.22_{\pm 1.13}$ \\
& Sexual & $5.09_{\pm 0.00}$ \\
& Toxic & $6.61_{\pm 0.00}$ \\
& Violence & $4.02_{\pm 0.93}$ \\
\cmidrule(lr){2-3}
& Average & $4.74_{\pm 1.21}$ \\
\midrule
\multirow{3}{*}{\algo{} (5-layer MLP)}
& Top-2 & $10.60_{\pm 1.37}$ \\
& QP & $1.82_{\pm 0.29}$ \\
& LSE ($\kappa=0.01$) & $\mathbf{0.86_{\pm 0.04}}$ \\
\midrule
\multirow{3}{*}{\algo{} (Linear Probe)}
& Top-2 & $12.44_{\pm 0.95}$ \\
& QP & $8.87_{\pm 0.15}$ \\
& LSE ($\kappa=0.01$) & $2.89_{\pm 0.69}$ \\
\bottomrule
\end{tabular}
\end{small}
\end{table*}

\begin{table*}[!t]
\centering
\caption{WildGuard LSE smoothing sensitivity for composed \algo{} barriers on \texttt{Qwen2-1.5B}. We compose 14 category-specific barriers and evaluate $1{,}120$ held-out WildGuardMix samples per seed. Harmful rate is reported as mean$_{\pm\mathrm{std}}$; lower is better. The main WildGuard Table~\ref{tab:composition_results} uses $\kappa=0.01$.}
\label{tab:wildguard-lse-kappa-sensitivity}
\vskip 0.15in
\vspace{-5mm}
\begin{small}
\setlength{\tabcolsep}{8pt}
\begin{tabular*}{\textwidth}{@{\extracolsep{\fill}}lclc}
\toprule
\textbf{$\kappa$} & \textbf{Harmful Rate $\downarrow$} & \textbf{$\kappa$} & \textbf{Harmful Rate $\downarrow$} \\
\midrule
0.01 & $\mathbf{0.86_{\pm 0.04}}$ & 1.0 & $4.26_{\pm 0.68}$ \\
0.03 & $0.71_{\pm 0.07}$ & 2.0 & $7.71_{\pm 0.65}$ \\
0.05 & $1.16_{\pm 0.13}$ & 5.0 & $9.02_{\pm 0.15}$ \\
0.1 & $0.89_{\pm 0.32}$ & 10.0 & $8.48_{\pm 0.39}$ \\
0.2 & $1.16_{\pm 0.26}$ & 20.0 & $7.77_{\pm 1.17}$ \\
0.5 & $1.61_{\pm 0.51}$ & 50.0 & $7.59_{\pm 1.79}$ \\
\bottomrule
\end{tabular*}
\end{small}
\vspace{-5mm}
\end{table*}

\subsection{Ablation on LSE Smoothing Parameter \texorpdfstring{$\kappa$}{kappa}}
\label{appendix:lse-kappa-ablation}
The LSE merge rule uses $\kappa$ to control how sharply the smooth minimum in~\cref{eqn:compose} approximates the most violated barrier constraint.
Because the exponential weights are proportional to $e^{-\kappa(b_k(h)-\delta)}$, smaller $\kappa$ gives a smoother merge that spreads the steering signal across more constraints, while larger $\kappa$ concentrates the merge on the most violated constraints and approaches the hard-minimum behavior used by Top-2-style selection.
As a result, $\kappa$ becomes more important when many constraints are composed: with only a few barriers, the smooth minimum is relatively stable, but with many independently trained barriers, the choice of $\kappa$ controls how broadly the controller considers all constraints versus focusing on the currently most active ones.

Table~\ref{tab:lse-kappa-sensitivity} ablates this parameter for HarmBench on \texttt{Gemma-2-9B} with the LSE variant and fixed steering strength $\alpha=0.2$.
Across all tested values, static HarmBench ASR remains at $0.00\%$, indicating that the LSE merge is not highly sensitive to $\kappa$ in this non-adaptive HarmBench setting once the steering strength is fixed.
The main difference appears in utility: very small $\kappa$ over-smooths the constraints and can unnecessarily distort representations, while larger values better preserve MMLU/GSM8K performance.
This supports using a sharper LSE approximation for the HarmBench experiments, where we set $\kappa=100$ by default.
For the WildGuard modular-composition experiment, Table~\ref{tab:wildguard-lse-kappa-sensitivity} shows that $\kappa$ is more sensitive when composing 14 independently trained category barriers.
In this setting, a smaller $\kappa$ is preferable because it considers a broader set of category constraints rather than approximating a hard minimum too aggressively, so we use $\kappa=0.01$ for the WildGuard LSE composition results.

\subsection{Ablation on Neural CBF Architecture}
\label{appendix:neural-cbf-ablation}
To test whether the neural CBF architecture is necessary, we compare the default nonlinear category barriers against a linear-probe variant in the WildGuard modular-composition experiment.
The linear-probe ablation retrains all 14 category-specific barriers without the neural intermediate layer, while keeping the same data split, composition rules, and evaluation protocol.
Concretely, each linear-probe barrier is a single affine classifier $b_k(h)=w_k^	op h+c_k$ applied directly to the hidden state, so the safe set for each category is a halfspace rather than the nonlinear decision boundary induced by the default 5-layer MLP.
As shown in Table~\ref{tab:composition-results-detailed}, the neural barriers consistently outperform the linear-probe variants under the same composition rules: for example, LSE composition improves from $2.89_{\pm 0.69}$ harmful rate with linear probes to $0.86_{\pm 0.04}$ with neural CBFs.
The gap is larger for QP and Top-2 composition, suggesting that nonlinear CBF boundaries provide a more faithful representation of category-specific safety constraints before the barriers are merged.
These results support the design choice of learning nonlinear latent CBFs rather than relying only on linear hidden-state classifiers for modular safety composition.

\subsection{Conditional Fixed-Direction Steering Baselines}
\label{appendix:conditional-fixed-steering}
We also evaluate whether the gains of \algo{} can be explained only by using a learned safety detector to decide \emph{when} to intervene.
Inspired by conditional activation steering~\citep{lee2025programming}, we construct two conditional fixed-direction baselines on \texttt{Gemma-2-9B}.
Both baselines use the learned barrier as an unsafe-state detector during generation.
When the detector fires, \texttt{Conditional ActAdd} applies the same fixed activation-addition vector used by \texttt{ActAdd}, while \texttt{Conditional DirAbl} applies the same fixed directional-ablation operation used by \texttt{DirAbl}.
Thus, these baselines share the classifier-triggered intervention mechanism with \algo{}, but do not solve the CBF control objective and do not adapt the steering direction or magnitude to the current hidden state.

\begin{wraptable}[8]{r}{0.4\textwidth}
\vspace{-4mm}
\centering
\begin{minipage}{\linewidth}
\centering
\caption{Conditional fixed-direction steering baselines on \texttt{Gemma-2-9B}. We report HarmBench ASR; lower is better.}
\label{tab:conditional-fixed-steering}
\begin{small}
\setlength{\tabcolsep}{6pt}
\begin{tabular}{lc}
\toprule
\textbf{Method} & \textbf{ASR $\downarrow$} \\
\midrule
Base model & $19.50$ \\
Conditional ActAdd & $18.17_{\pm 0.00}$ \\
Conditional DirAbl & $12.50_{\pm 0.00}$ \\
\algo{} & $\mathbf{0.00_{\pm 0.00}}$ \\
\bottomrule
\end{tabular}
\end{small}
\end{minipage}
\vspace{-2mm}
\end{wraptable}

Table~\ref{tab:conditional-fixed-steering} shows that conditioning fixed-direction steering on a learned detector improves over applying no defense, but it does not match \algo{}.
This suggests that the benefit of \algo{} is not only deciding when to steer: the CBF update also computes a context-dependent correction direction, which more effectively moves unsafe hidden states toward the learned safe set.

\subsection{Compute Resources and Wall-Clock Cost}
\label{appendix:compute-resources}
Unless otherwise stated, the main experiments were run using two NVIDIA H200 GPUs with bfloat16 model weights.
The latency microbenchmarks in Appendix~\ref{sec:computational-benchmark-details} are the exception: they use a single H200 GPU with batch size $1$ to measure per-token steering overhead in the real-time decoding setting.

\para{Training.}
Hidden-state extraction is the main preprocessing cost before training the latent safety constraints.
For Qwen2-1.5B, hidden-state extraction typically takes about 15--25 minutes per seed; for Ministral-8B and Llama-2-7B, about 30--45 minutes; and for Gemma-2-9B, about 45--60 minutes.
Once hidden states are extracted, training \algo{} or \texttt{SaP} is comparatively lightweight: roughly 2--4 minutes per seed for Qwen2-1.5B, 3--6 minutes for Ministral-8B and Llama-2-7B, and 5--10 minutes for Gemma-2-9B.
The training-time baselines require more time because they update model-side intervention parameters: \texttt{Circuit Breaker} and \texttt{ReFT-r1} take less than one hour per model and seed under the hyperparameters in Appendix~\ref{sec:implementation-details-baselines}.
For the WildGuard experiment, each Qwen2-1.5B category barrier is trained independently; individual category jobs are short after hidden-state extraction and the 14 categories are parallelized across GPUs.

\para{Evaluation.}
For HarmBench static ASR evaluation, the approximate wall-clock time per model and seed is 6--20 minutes for Qwen2-1.5B, 6--15 minutes for Ministral-8B, 30--54 minutes for Llama-2-7B, and 30--40 minutes for Gemma-2-9B.
Adaptive-attack evaluation is more expensive and takes about 30--40 minutes per model and seed for the configured adaptive-evaluation run.
XSTest over-refusal evaluation uses 250 safe-only prompts and usually takes less than 10 minutes per model and seed, with the wall-clock time often dominated by batched API judge calls.
MMLU/GSM8K utility evaluation through the LM Evaluation Harness takes about 1--3 hours per model and method, depending on the model size and task subset.
The WildGuard composition evaluation on Qwen2-1.5B evaluates $1{,}120$ held-out WildGuardMix samples and takes about 10--30 minutes per seed after the category-specific barriers have been trained.

%% file: latex/additional_related_work.tex

\section{Additional Related Work on Inference-time Safety}
\label{appendix:additional-related-work}
This section situates \algo{} within inference-time safety methods, covering
text-level defenses, latent safety classifiers, activation steering,
adaptive activation steering, and constraint-based steering, including a
comparison with \texttt{SaP}, the most adjacent baseline to \algo{}.
These methods complement training-time
approaches~\citep{NeurIPS22_ouyang2022training,arXiv22_bai2022training,bai2022constitutionalaiharmlessnessai}
by adding an additional layer of defense during deployment,
since safety-tuned models can still be vulnerable to adversarial prompts~\citep{zou2023universal,liu2023autodan,shen2024anything,zeng2024johnny}.

\para{Text-level Defenses.}
\label{appendix:related-text-level-filters}
One line of inference-time safety work attempts to mitigate unsafe behavior
by inspecting input prompts or generated outputs.
Input filtering methods detect suspicious prompts before generation
~\citep{cao2024defending,jain2023baseline}, while input modification methods rewrite,
paraphrase, or otherwise transform prompts to weaken potential attacks~\citep{yi2025benchmarking,wei2026jailbreak,xie2023defending,robey2023smoothllm}.
Output filtering methods instead inspect generated text and block, revise, or
regenerate responses when safety violations are detected~\citep{phute2023llm,zhang2024parden,wang2024defending}.
These input and output filtering methods are attractive because they are model-agnostic and can be deployed
outside the target LLM, but they primarily rely on text-level patterns or heuristic filters.
As a result, they can be brittle to paraphrases, obfuscation, and
adaptive white-box attacks~\citep{wei2023jailbroken}.

Learned language-model classifiers provide stronger detection~\citep{mazeika2024harmbench,inan2023llamaguardllmbasedinputoutput} and can yield promising defense results when trained at
large scale~\citep{sharma2025constitutionalclassifiersdefendinguniversal}, but they typically require running an additional model over prompts
or responses. This additional checking introduces nontrivial computational
overhead, especially when the system must detect an unsafe output and then
regenerate a safer response.

\para{Latent Safety Classifiers.}
Recent work therefore studies classifiers over intermediate hidden states as a
more efficient alternative to text-space safety classification
~\citep{cunningham2026constitutional}. Hidden-state classifiers detect unsafe behavior from
model-internal representations rather than completed text, offering a promising
path toward lower-latency safety detection. However, detection-only systems still
primarily decide whether to block a response; they do not directly
modify generation toward a safer continuation.

\para{Activation Steering.} A promising direction for safer generation is activation steering,
which intervenes on model activations during generation.
Activation-addition methods modify hidden states along fixed directions, often using a linear addition~\citep{arditi2024refusal,rimsky2024steering,turner2023steering,zou2023representation,li2023inference,wu2025axbenchsteeringllmssimple}.
Recent work further advances this direction with affine and nonlinear steering~\citep{singh2024representation,vu2026angular,you2026sphericalsteeringgeometryawareactivation}.
These interventions are simple but can be coarse,
as the same direction is applied across contexts and may degrade utility.

\para{Adaptive Activation Steering.}
Beyond fixed-vector activation steering, recent work has used learned models
to provide more flexible inference-time steering signals.
A simple form is conditional steering~\citep{lee2025programming},
where a fixed steering vector is applied only when a latent classifier detects
undesired behavior.
Other methods predict context-dependent steering directions,
for example using hypernetwork-style modules~\citep{sun2025hypersteeractivationsteeringscale},
or directly transform hidden states to modify model behavior~\citep{wang2025truthflowtruthfulllmgeneration}.

A related line of work learns scalar signals over latent representations
and uses them to guide iterative steering.
For example, \texttt{PPLM} uses gradients from attribute models during decoding~\citep{dathathri2020plugplaylanguagemodels},
\texttt{RE-Control} learns value functions over hidden states for representation editing~\citep{kong2024aligning},
\texttt{ODESteer} learns a density-ratio function and follows its gradient
through ODE integration~\citep{zhao2026odesteer},
and \texttt{BRT-Align} uses learned safety values with perturbation search
to steer away from unsafe regions~\citep{karnik2025preemptivedetectionsteeringllm}.
However, these methods use learned signals primarily to define a direction or
search objective for iterative steering, rather than as constraints that must be
satisfied during generation.

In contrast, \algo{} treats learned neural safety classifiers as explicit constraints on latent representations.
Rather than optimizing activations toward a desirable direction, it uses the classifier as a Control Barrier Function and derives
constraint-guided steering updates for safety-critical generation.
This perspective is especially suitable for LLM safety defense, where the goal is
not only to encourage preferred behavior, but to prevent trajectories from
entering unsafe regions.

\para{Constraint-based Steering.}
Recent work such as \texttt{SaP}~\citep{ICML2025_chenlearning} suggests a natural combination:
using learned latent safety classifiers not only as detection signals or steering objectives,
but also as explicit constraints on hidden representations.
Under this view, the classifier decision boundary defines a safe region in latent space,
and inference-time steering aims to keep hidden states within, or move them toward,
that region during generation.

\texttt{SaP} is the closest prior method to \algo{}.
Like \algo{}, it learns safety constraints from hidden states and safety labels,
then modifies hidden states at inference time to remain within the learned safe region.
We therefore include \texttt{SaP} as a main baseline.
Both \texttt{SaP} and \algo{} use hidden states and safety labels to learn latent safety constraints,
but they differ in the constraint class and the inference-time enforcement mechanism.
\texttt{SaP} learns linear polytope constraints and enforces them through
iterative gradient-descent at inference time.
In contrast, \algo{} learns nonlinear CBF constraints and derives closed-form
local steering updates, enabling more expressive safety boundaries with
substantially lower per-step steering overhead.

%% file: latex/faq.tex

\section{Additional Discussion and Clarifications}
\label{asec:additional-discussion}

\subsection{Assumptions for Theoretical Safety}
\label{asec:clarifications-guarantees}

\paragraph{Q1. Does \algo{} provide a worst-case safety guarantee?}\mbox{}\\
\noindent\textbf{Answer.}
No.
In the main paper, we clarify that although Control Barrier Functions are a standard tool for maintaining safety within a safe set, the LLM setting does not admit a perfect worst-case guarantee.
Such a guarantee would require a perfect classifier or barrier whose safe set exactly corresponds to text-level safety for all possible prompts and generations.
This is not realistic for two reasons.
First, safety data are finite, while the space of possible LLM generations and hidden states is extremely large, so a learned barrier cannot certify every possible trajectory.
Second, there is a latent--semantic safety gap: even if a hidden state appears safe under the learned classifier, later layers and decoding can still produce unsafe text, as discussed in Appendix~\ref{asec:limitations}.
Our goal is therefore not to prove worst-case text-level safety, but to steer responses toward an approximated latent safe set learned from data.
This approximation is still practically meaningful: recent work shows that hidden-state classifiers can be useful in production-grade safety systems for frontier models~\citep{cunningham2026constitutional}, suggesting that learned latent safe sets can be strong enough to support effective deployment-time interventions.

\paragraph{Q2. Is the updated hidden state guaranteed to be inside the safe set after one steering step?}\mbox{}\\
\noindent\textbf{Answer.}
No.
Our implementation applies a one-step CBF update during decoding.
The local CBF condition moves the hidden state toward the learned safe set, and the theorem characterizes exponential progress toward the safe set under the stated assumptions, but this does not mean that the updated hidden state $h'$ must lie inside the safe set after a single finite update.
In practice, however, the one-step update is already effective: our experiments show strong reductions in adversarial attack success while preserving utility across model families.

\subsection{Data, Supervision, and Generalization}
\label{asec:clarifications-data}

\paragraph{Q3. Does \algo{} generalize to unseen harmful behaviors?}\mbox{}\\
\noindent\textbf{Answer.}
Like other data-driven safety methods, \algo{} depends on the coverage and quality of the safety data used to train the barrier. We do not expect a barrier trained for one safety concept to perfectly generalize to all unrelated out-of-distribution harms. Instead, the framework is modular: new barriers can be trained for newly identified safety concepts and composed with existing barriers. That said, we consistently observe empirical generalization beyond the exact training examples. In HarmBench, we train on the training behaviors and evaluate on held-out adversarial behaviors, where \algo{} still substantially reduces attack success. In WildGuard, even individual category barriers slightly reduce harmful generations outside their exact category, while composing category-specific barriers substantially improves coverage. Finally, in the adaptive-attack evaluation, \algo{} also reduces attack success even though the adaptive attack trajectories are not part of the training data. 
Overall, despite being trained on the same data as the learned baselines, \algo{} reduces harmful generations more effectively.

\paragraph{Q4. What supervision is used to train \algo{}?}\mbox{}\\
\noindent\textbf{Answer.}
\algo{} uses binary safe/unsafe supervision over hidden states.
Sequence-level safety labels are inherited by token-level hidden states extracted from the same response, producing hidden-state examples labeled as safe or unsafe.
We do not require preference pairs, ranked responses, or paired chosen/rejected examples.
Appendix~\ref{appendix:hidden-state-counts} gives the token-level hidden-state training details.

\paragraph{Q5. How is \algo{} different from preference-tuned safety methods?}\mbox{}\\
\noindent\textbf{Answer.}
Training-time alignment methods such as RLHF~\citep{NeurIPS22_ouyang2022training,arXiv22_bai2022training,bai2022constitutionalaiharmlessnessai}, Safe-RLHF~\citep{dai2023safe}, and DPO-style methods~\citep{arXiv23_rafailov2023direct,zhou2024beyond,liu2024enhancing} update model parameters using preference, reward, or safety-constraint objectives.
In contrast, \algo{} trains a separate latent barrier from binary safety labels and applies it as an inference-time controller on top of an already aligned model.
This makes \algo{} complementary to preference tuning: the base model can already be RLHF/DPO-tuned, while the barrier can be updated or composed for new safety risks without retraining the LLM.

\subsection{BarrierSteer Variants and Steering Geometry}
\label{asec:clarifications-variants-geometry}

\paragraph{Q6. Which \algo{} variant is preferred?}\mbox{}\\
\noindent\textbf{Answer.}
We recommend \algo{} (LSE) as the default variant. Top-2 selects the two most violated barriers and is useful as a simple, fast ablation for isolating steering-strength effects, but when many safety categories are composed, ignoring the remaining constraints can be suboptimal. QP is useful as a more exact optimization-based reference. LSE is the best default for deployment and main reporting because it accounts for all active constraints, matches the strong defense behavior of the multi-constraint formulations, and has the lowest measured steering latency among our variants in Table~\ref{tab:latency}. Appendix~\ref{appendix:variant-selection-details} provides steering-strength ablations, and Appendix~\ref{sec:computational-benchmark-details} provides the detailed latency comparison among variants.

\paragraph{Q7. Can \algo{} be adapted for steering geometries beyond linear addition?}\mbox{}\\
\noindent\textbf{Answer.}
As discussed in Appendix~\ref{appendix:additional-related-work}, recent activation-steering 
work has moved beyond fixed linear addition toward affine, angular, spherical, and 
other nonlinear or norm-preserving geometries~\citep{singh2024representation,vu2026angular,you2026sphericalsteeringgeometryawareactivation}. 
\algo{} is complementary to these methods: the current implementation 
applies the CBF correction as an additive hidden-state update for 
efficiency, but the learned barrier can also provide a 
local safety signal for other update geometries. 
Extending BarrierSteer's classifier-guided CBF constraints 
to norm-preserving or rotation-based updates is a natural direction for future work.

\subsection{Evaluation, Robustness, and Deployment Scope}
\label{asec:clarifications-evaluation-deployment}

\paragraph{Q8. How robust is \algo{} to adaptive attacks?}\mbox{}\\
\noindent\textbf{Answer.}
Adaptive attacks are stronger than static HarmBench attacks because they optimize against the defended model while the steering mechanism is active.
Table~\ref{tab:joint_results} reports adaptive-attack ASR together with static ASR, over-refusal, and utility across four model families.
\algo{} consistently improves over the undefended model and remains competitive with or stronger than other safety baselines.
The adaptive results also show that near-zero static ASR should not be interpreted as absolute robustness: stronger attacks can still partially bypass the defense, but \algo{} substantially improves robustness under an adaptive threat model.
Additional steering-strength sweeps in Appendix~\ref{appendix:variant-selection-details} show how the adaptive robustness--utility trade-off changes with $\alpha$.

\paragraph{Q9. Does \algo{} over-refuse benign prompts?}\mbox{}\\
\noindent\textbf{Answer.}
We evaluate over-refusal using XSTest safe-only prompts and report it alongside ASR and utility in Table~\ref{tab:joint_results}.
The results show that \algo{} can increase over-refusal relative to the base model, but it avoids the most severe over-refusal observed for some stronger baselines and can be tuned through the steering strength $\alpha$.
This safety--utility trade-off is expected for inference-time steering: stronger interventions improve attack resistance but may reject more benign prompts.
Appendix~\ref{appendix:variant-selection-details} provides the corresponding $\alpha$ sweeps, and Appendix~\ref{appendix:extended-utility-results} reports additional utility benchmarks beyond MMLU and GSM8K.

\paragraph{Q10. Is \algo{} production-grade?}\mbox{}\\
\noindent\textbf{Answer.}
Not yet.
Our experiments use medium-scale open-weight LLMs, so we do not claim production-grade deployment on frontier systems.
However, the broader direction is motivated by recent evidence that hidden-state classifiers can be production-grade safety detectors for frontier models~\citep{cunningham2026constitutional}.
\algo{} builds on this direction by using learned hidden-state safety signals not only to detect unsafe trajectories, but also to steer generation with low additional latency; Table~\ref{tab:latency} and Appendix~\ref{sec:computational-benchmark-details} quantify this overhead.
Demonstrating production-grade steering would require direct frontier-model hidden-state access and deployment-scale evaluation, which we leave to future work.

%% file: latex/limitations.tex

\section{Limitations and Future Work}
\label{asec:limitations}

\para{Dependence on supervised safety labels.}
\algo{} learns barriers from labeled safe and unsafe hidden states, so its effectiveness depends on the coverage and quality of the safety data.
If a harmful behavior is absent from the training dataset, the corresponding latent region may not be captured by the learned CBFs.
Future work should study broader safety taxonomies, active data collection for uncovered harms, and continual updates that add new barriers for emerging risk categories.

\para{Latent safety is not identical to semantic safety.}
The control-theoretic guarantees in this paper apply to learned latent constraints, not directly to text-level safety, and are conditional on the learned barriers accurately representing the intended safety concepts.
One way to mitigate this is to train stronger latent constraints with larger-scale red-teaming data~\citep{cunningham2026constitutional}.

\para{Evaluation scope.}
Our experiments cover multiple open-weight instruction-tuned models, HarmBench adversarial attacks, XSTest over-refusal, utility benchmarks, adaptive attacks, and WildGuard modular composition.
However, they do not establish robustness against all possible adaptive attackers or all deployment settings.
Future work should evaluate larger frontier models when hidden-state access is available, broader multilingual and multi-turn safety settings, and stronger adaptive attacks that optimize jointly over prompts and steering responses.

\para{Geometry and intervention form.}
The current implementation applies the CBF correction as an additive hidden-state update.
Although the update direction is computed from nonlinear barrier geometry, additive interventions can still alter activation norms and may not be optimal for all models.
Future work can combine BarrierSteer's classifier-guided CBF constraints with norm-preserving steering updates.

\para{Safety--utility trade-off.}
\algo{} improves the safety--utility trade-off relative to the baselines we evaluate, but it does not remove the trade-off.
Increasing steering strength can reduce unsafe generations while increasing over-refusal or degrading downstream utility.
Future work should develop adaptive steering schedules that tune intervention strength based on uncertainty, task context, and deployment-specific risk tolerance.

%% file: latex/broader_impact.tex

\section{Broader Impact}
\label{asec:broader-impact}

This work aims to improve the safety of large language models by steering model activations away from unsafe generations during inference.
Its potential positive impact is to provide an additional defense layer that can complement training-time alignment, reduce harmful outputs, and make safety interventions more modular and efficient.
Because the method does not require modifying the base model parameters, it can complement existing hidden-state classifiers, which have been shown to be useful for frontier models at deployment.

The work also has potential risks.
First, safety-steering systems can create a false sense of security if users interpret lower attack success rates as robustness.
Second, the same evaluation infrastructure used to measure jailbreak robustness can be misused to search for stronger attacks.
Third, learned barriers may encode biases or blind spots from the training data and classifiers, leading to increased over-refusal or uneven effects across benign users and topics.
Fourth, the same constraint-learning and steering mechanism could be misused by training barriers on undesirable behaviors and steering the model toward, rather than away from, unsafe responses.
We mitigate these risks by evaluating adaptive attacks, reporting over-refusal and utility, discussing limitations, and framing the method as a complementary safety layer rather than a complete guarantee of safe behavior.

%% file: latex/llm_usage.tex

\section{Declaration of LLM Usage}
\label{asec:llm-usage}

LLMs are the main objects of study in this work.
Our method extracts and steers hidden states from instruction-tuned LLMs, and our experiments evaluate their generated responses using safety classifiers and benchmark tasks.
These methodological uses are described throughout Secs.~\ref{sec:problem}--\ref{sec:experiments}.
We also used large language models to support implementation and paper editing.
Specifically, LLM assistance was used for code-writing support, LaTeX editing, and wording refinement.
All technical claims, experimental results, citations, and final manuscript content were verified by the authors.\\